\newtheorem{theorem}{Theorem}
\newtheorem{lemma}{Lemma}
\newtheorem{proposition}{Proposition}
\newtheorem*{atheorem}{Theorem}
\newtheorem*{aproposition}{Proposition}
\title{Penalized  Proximal Policy Optimization for Safe Reinforcement Learning}
\author{
Linrui Zhang$^{1}$\thanks{This work was done during Linrui Zhang's internship at JD Explore Academy, Beijing, China.}
\and
Li Shen$^2$\thanks{Corresponding Author.}\and
Long Yang$^3$\and
Shixiang Chen$^2$ \\
Xueqian Wang$^{1}$\and
Bo Yuan$^1$\and
Dacheng Tao$^2$\
\affiliations
$^1$Tsinghua Shenzhen International Graduate School, Tsinghua University\\
$^2$JD Explore Academy\\
$^3$ School of Artificial Intelligence, Peking University
\emails
{zlr20@mails.tsinghua.edu.cn, \{mathshenli,chenshxiang,dacheng.tao\}@gmail.com, yanglong001@pku.edu.cn,
 boyuan@ieee.org,
wang.xq@sz.tsinghua.edu.cn}
}
\begin{document}

\maketitle

\begin{abstract}
 Safe reinforcement learning aims to learn the optimal policy while satisfying safety constraints, which is essential in real-world applications. However, current algorithms still struggle for efficient policy updates with hard constraint satisfaction. In this paper, we propose Penalized Proximal Policy Optimization (P3O), which solves the cumbersome constrained policy iteration via a single minimization of an equivalent unconstrained problem. Specifically, P3O utilizes a simple-yet-effective penalty function to eliminate cost constraints and removes the trust-region constraint by the clipped surrogate objective. We theoretically prove the exactness of the proposed method with a finite penalty factor and provide a worst-case analysis for approximate error when evaluated on sample trajectories. Moreover, we extend P3O to more challenging multi-constraint and multi-agent scenarios which are less studied in previous work. Extensive experiments show that P3O outperforms state-of-the-art algorithms with respect to both reward improvement and constraint satisfaction on a set of constrained locomotive tasks.
\end{abstract}
\section{Introduction}

Reinforcement Learning (RL) has achieved significant successes in playing video games~\cite{vinyals2019grandmaster}, robotic manipulation~\cite{levine2016end}, mastering Go~\cite{silver2017mastering}, etc. 
However, standard RL merely maximizes cumulative rewards, which may lead to undesirable behaviors in real-world applications especially for safety-critical tasks.


It is intractable to learn reasonable policies by directly penalizing unsafe interactions onto the reward function, since varied intensities of the punishment result in different Markov decision processes. There is still a lack of theories revealing the explicit relationship between policy improvement and safety satisfaction via reward shaping.

Constrained Markov decision process (CMDP)~\cite{altman1999constrained} is a more practical and popular formulation that requires the agent to perform actions under given constraints. A common approach to solving such constrained sequential optimization is approximating the non-convex constraint function to a quadratic optimization problem via Taylor's formulation~\cite{achiam2017constrained,yang2020projection}.
However, those algorithms still have the following drawbacks: (1) The convex approximation to a non-convex policy optimization results in non-negligible approximate errors and only learns near constraint-satisfying policies;
 (2) The closed-form solution to the primal problem involves the inversion of Hessian matrix, which is computationally expensive in large CMDPs with deep neural networks; (3) If there is more than one constraint, it is cumbersome to obtain the analytical solution without an extra inner-loop optimization, which limits their applications in multi-constraint and multi-agent scenarios; (4) If the primal problem is infeasible under certain initial policies, above algorithms require additional interactions for feasible recovery, which reduces their sample-efficiency.

To address above issues, we propose \textbf{P}enalized \textbf{P}roximal \textbf{P}olicy \textbf{O}ptimization (P3O) algorithm that contains two key techniques. Firstly, we employ exact penalty functions to derive an  equivalent unconstrained optimization problem that is naturally compatible with multiple constraints and arbitrary initial policies. Secondly, we extend the clipped surrogate objective in Proximal Policy Optimization (PPO)~\cite{schulman2017proximal} to CMDPs, which eliminates the trust-region constraint on both reward term and cost term. As a consequence, our method removes the need for quadratic approximation and Hessian matrix inversion. By minimizing the unconstrained objective instead, the solution to the primal constrained problem can be obtained.

Conclusively, the proposed P3O algorithm has three main strengths:
(1) \textit{accuracy}: P3O algorithm is accurate and sample-efficient with first-order optimization over a theoretically equivalent unconstrained objective instead of solving an approximate quadratic optimization;
(2) \textit{feasibility}: P3O algorithm could admit arbitrary initialization with a consistent objective function and doesn't need additional recovery methods for infeasible policies;
(3) \textit{scalability}: P3O is naturally scalable to multiple constraints or agents without increasing complexity due to the exact penalty reformulation.

In the end, we summarize our contributions as four-fold:
\begin{itemize}
\item  We propose the Penalized Proximal Policy Optimization (P3O) algorithm for safe RL, which employs first-order optimization over an equivalent  unconstrained objective.
\item  We theoretically prove the exactness of the penalized method when the penalty factor is sufficiently large (doesn't have to go towards positive infinity) and propose an adaptive penalty factor tuning algorithm.
\item We extend clipped surrogate objectives and advantage normalization tricks to CMDPs, which are easy to implement and enable a fixed penalty factor for general good results across different tasks.
\item We conduct extensive experiments to show that P3O outperforms several state-of-the-art algorithms with respect to both reward improvement and constraint satisfaction and demonstrate its efficacy in more difficult multi-constraint and multi-agent scenarios which are less studied in previous safe RL algorithms.
\end{itemize}

\section{Related Work}

Safety in reinforcement learning is a challenging topic formally raised by \citeauthor{garcia2015comprehensive} \shortcite{garcia2015comprehensive}. Readers can  refer to the survey~\cite{liu2021policy} for recent advances in safe RL. In this section, we only summarize the most related studies to our algorithm. 

\paragraph{Primal-Dual solution.}

Considering the success of Lagrangian relaxation in solving constrained optimization problems, Primal-Dual Optimization (PDO)~\cite{chow2017risk} and its variants~\cite{tessler2018reward,stooke2020responsive} leverage the Lagrangian Duality in constrained reinforcement learning. Although plenty of work has provided rigorous analysis on duality gap~\cite{paternain2019constrained}, non-asymptotic convergence rate~\cite{ding2020natural} and regret bound~\cite{ding2021provably}, Primal-dual methods are still hard to be implemented and applied in practical use due to their sensitivity to the initialization as well as the learning rate of Lagrangian multipliers.

\paragraph{Primal solution.}
Constrained Policy Optimization (CPO)~\cite{achiam2017constrained} directly searches the feasible policy in the trust region and guarantees a monotonic performance improvement while ensuring constraint satisfaction by solving a quadratic optimization problem with appropriate approximations. Projection-based CPO~\cite{yang2020projection} updates the policy in two stages by firstly performing regular TRPO~\cite{schulman2015trust} and secondly projecting the policy back into the constraint set. Those methods based on local policy search mainly suffer from approximate errors, the inversion of high-dimensional Hessian matrices, and the poor scalability to multiple constraints. To address drawbacks of the quadratic approximation, FOCOPS~\cite{zhang2020first} solves the optimum of constrained policy optimization within the non-parametric space and then derives the first-order gradients of the $\ell_2$ loss function within the parameter space. Nevertheless, FOCOPS has more auxiliary variables to learn than our first-order optimization objective, and the analytical solutions of FOCOPS are not straightforward with the increasing number of constraints.

The most similar work to our proposed algorithm is Interior-point Policy Optimization (IPO)~\cite{liu2020ipo} which uses log-barrier functions as penalty terms to restrict policies into the feasible set. However, the interior-point method requires a feasible policy upon initialization which is not necessarily fulfilled and needs a further recovery. Moreover, the log-barrier function possibly leads to numerical issues when the penalty term goes towards infinity, or the solution is not exactly the same as the primal problem. By contrast, we employ the exact penalty function to derive an equivalent unconstrained objective and restrict policy updates in the trust region by clipping the important sampling ratio on both reward and cost terms for the approximate error reduction, which is different from the prior work.

\section{Preliminaries}
A Markov decision process (MDP)~\cite{sutton2018reinforcement} is a tuple $(S,A,R,P,\mu)$ ,where $S$ is the state space, $A$ is the action space, $R:  S \times  A \times  S  \mapsto \mathbb{R}$ is the reward function, $P: S \times  A \times  S \mapsto [0,1]$ is the transition  probability  function to describe the dynamics of the environment, and $\mu : S  \mapsto [0,1] $ is the initial state distribution. A stationary policy $\pi : S \mapsto \cal{P}(A)$ maps the given states to probability distributions over action space and  $\Pi$ denotes the set of all stationary policies $\pi$.
The optimal policy $\pi^*$ maximizes the expected discounted return
$J_R(\pi) = \mathop{\mathbb{E}}_{\tau\sim \pi}\big [ \sum^\infty_{t=0}\gamma^t R(s_t,a_t,s_{t+1})\big ]$ where $\tau=\{(s_t,a_t)\}_{t\ge0}$ is a sample trajectory and $\tau\sim\pi$ indicates the distribution over trajectories depending on $s_0 \sim \mu, a_t \sim \pi(\cdot | s_t), s_{t+1} \sim P(\cdot | s_t,a_t)$.
The value function is defined as $V_R^\pi (s) = \mathop{\mathbb{E}}_{\tau\sim \pi}\big [ \sum^\infty_{t=0}\gamma^t R(s_t,a_t,s_{t+1}) | s_0 = s\big ]$, the action-value function is defined as $Q_R^\pi (s,a) = \mathop{\mathbb{E}}_{\tau\sim \pi}\big [ \sum^\infty_{t=0}\gamma^t R(s_t,a_t,s_{t+1}) | s_0 = s,a_0 =a \big]$ and the advantage function is defined as $A_R^\pi(s,a) = Q_R^\pi(s,a) - V_R^\pi(s)$.

A constrained Markov decision process (CMDP)~\cite{altman1999constrained} extends MDP with a constraint set $C$ consisting of cost functions $C_i : S \times A\times S\mapsto \mathbb{R},\ i=1,2,...,m$. We define the expected discounted cost-return $J_{C_i}(\pi) = \mathop{\mathbb{E}}_{\tau\sim \pi}\big [ \Sigma^\infty_{t=0}\gamma^t C_i(s_t,a_t,s_{t+1})\big ]$ and $V_{C_i}^\pi, Q_{C_i}^\pi,A_{C_i}^\pi$ similarly in MDPs. The set of feasible stationary policies for a CMDP is denoted as $\Pi_C = \{ \pi \in \Pi \ | J_{C_i}(\pi) \leq d_i,\ \forall i\}$, where $d_i$ is the upper limit of the corresponding safety or cost constraint. The goal of safe RL is to find an optimal policy over the hard safe constraint, i.e.,  $\pi^* = \mathop{\arg\max}_{\pi \in \Pi_C} J_R(\pi)$.
\section{Methodology}
Iterative search for optimal policy is commonly used in literature~\cite{schulman2015trust,achiam2017constrained,zhang2020first}.
We formulate the unbiased constrained policy optimization problem over $s\sim d^{\pi}$ and $a\sim \pi$ as:
\begin{equation}
\begin{aligned}
&\pi_{k+1}  =  \mathop{\arg\max}_{\pi}\mathop{\mathbb{E}}_{\substack{s\sim d^{\pi} \\a\sim \pi}}[ A^{\pi_k}_R (s,a)]\\
& \mathrm{s.t.}  \quad J_{C_i}(\pi_{k})+ \frac{1}{1-\gamma}\mathop{\mathbb{E}}_{\substack{s\sim d^{\pi} \\a\sim \pi}} \big  [A_{C_i}^{\pi_ k} (s,a) \big ] \leq d_i,\ \forall i.\\
\end{aligned}
\label{cppo1}
\end{equation}
where $d^\pi(s) = (1-\gamma) \sum^\infty_{t=0} \gamma^t P(s_t=s | \pi)$ denote the discounted future state distribution.

\begin{proposition}
The new policy $\pi_{k+1}$  obtained from the current policy $\pi_k$ via problem \eqref{cppo1} yields a monotonic return improvement and hard constraint satisfaction.
\end{proposition}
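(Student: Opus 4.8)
The plan is to reduce both assertions to the performance difference lemma of Kakade and Langford, which for any two policies $\pi$ and $\pi'$ gives the exact identity $J_R(\pi') - J_R(\pi) = \frac{1}{1-\gamma}\,\mathbb{E}_{s\sim d^{\pi'},\,a\sim \pi'}[A_R^{\pi}(s,a)]$, and an analogous identity for each cost $J_{C_i}$. The feature of problem \eqref{cppo1} that makes this work cleanly is that its expectations are taken over the discounted state distribution $d^{\pi}$ of the \emph{candidate} policy $\pi$ rather than over $d^{\pi_k}$; this is exactly what renders the formulation unbiased and lets the lemma apply verbatim, with no surrogate/trust-region remainder. I would therefore first state this lemma as the single tool and then read off each claim.

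For hard constraint satisfaction, I would instantiate the cost identity at the optimizer $\pi=\pi_{k+1}$:
\[
J_{C_i}(\pi_{k+1}) = J_{C_i}(\pi_k) + \tfrac{1}{1-\gamma}\,\mathbb{E}_{s\sim d^{\pi_{k+1}},\,a\sim \pi_{k+1}}\big[A_{C_i}^{\pi_k}(s,a)\big].
\]
The right-hand side is precisely the left-hand side of the constraint in \eqref{cppo1} evaluated at $\pi_{k+1}$, and since $\pi_{k+1}$ is the argmax over the feasible set it must satisfy that constraint, i.e. the quantity is $\le d_i$. Hence $J_{C_i}(\pi_{k+1}) \le d_i$ for every $i$, so $\pi_{k+1}\in\Pi_C$.

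For monotonic return improvement, I would first verify that $\pi_k$ is itself a feasible point of \eqref{cppo1}. Setting $\pi=\pi_k$ and using the defining property $\mathbb{E}_{a\sim\pi_k}[A_R^{\pi_k}(s,a)]=0$ and $\mathbb{E}_{a\sim\pi_k}[A_{C_i}^{\pi_k}(s,a)]=0$, the objective vanishes and each constraint collapses to $J_{C_i}(\pi_k)\le d_i$, which holds under the standing assumption that $\pi_k$ is feasible (the inductive hypothesis maintained by the previous paragraph). Because $\pi_{k+1}$ maximizes the objective over a feasible set containing $\pi_k$, its value is at least that of $\pi_k$, namely $\mathbb{E}_{s\sim d^{\pi_{k+1}},\,a\sim\pi_{k+1}}[A_R^{\pi_k}(s,a)]\ge 0$. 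Multiplying by the positive constant $\tfrac{1}{1-\gamma}$ and invoking the reward identity yields $J_R(\pi_{k+1}) - J_R(\pi_k)\ge 0$.

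The main obstacle is conceptual rather than computational: I must ensure the lemma is applied with the correct on-policy distribution $d^{\pi_{k+1}}$, since substituting $d^{\pi_k}$ (as in approximate surrogate methods) would only produce a bound with a total-variation penalty, not an equality. Because \eqref{cppo1} is written over $s\sim d^{\pi}$, both the constraint and the objective translate into \emph{exact} statements about $J_{C_i}(\pi_{k+1})$ and $J_R(\pi_{k+1})$, and the two claims follow without approximation. The only assumption I lean on is the feasibility of $\pi_k$, which the constraint-satisfaction half of the argument propagates forward along the iteration.
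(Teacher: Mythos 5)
Your proof is correct and takes essentially the same route as the paper's: both arguments rest entirely on the performance difference lemma applied to $R$ and to each $C_i$ over the candidate policy's own discounted state distribution $d^{\pi}$, which turns the objective and constraints of problem \eqref{cppo1} into exact statements about $J_R(\pi_{k+1})-J_R(\pi_k)$ and $J_{C_i}(\pi_{k+1})$. If anything you are slightly more explicit than the paper, which only exhibits the equivalence of \eqref{cppo1} with $\max_{\pi\in\Pi_C}J_R(\pi)$ and leaves implicit the feasibility of $\pi_k$ that your inductive step correctly identifies as the hypothesis needed for monotone improvement.
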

\begin{proof}
See the supplemental material.
\end{proof}

In this paper, we consider the parametric policy  $\pi(\theta)$,  i.e., using the neural network. Let $r(\theta) = \frac{\pi(\theta)}{\pi(\theta_k)}$ be the importance sampling ratio, then we rewrite the optimization problem (\ref{cppo1}) as follows:
\begin{equation}
\resizebox{.91\linewidth}{!}{$
\begin{aligned}
 &\theta_{k+1} =  \mathop{\arg\min}_{\theta}\mathop{\mathbb{E}}_{\substack{s\sim d^{\pi} \\a\sim \pi_{k}}}[-r(\theta)A^{\pi_k}_R (s,a)]\\
& \mathrm{s.t.} \ \mathop{\mathbb{E}}_{\substack{s\sim d^{\pi} \\a\sim \pi_{k}}} \big  [r(\theta)A_{C_i}^{\pi_ k} (s,a) \big ] + (1-\gamma)(J_{C_i}(\pi_{k})-d_i) \leq 0, \ \forall i.\\
\end{aligned}
$}
\label{cppo2}
\end{equation}

Different from previous methods \cite{achiam2017constrained,yang2020projection} that approximate non-convex problem (\ref{cppo2}) to convex optimization via Taylor's formulation on the trust region, we penalize constraints with ReLU operators to the objective function which yields an unconstrained problem:
\begin{equation}
\resizebox{.91\linewidth}{!}{$
\begin{aligned}
\theta_{k+1} & =  \mathop{\arg\min}_{\theta} \mathop{\mathbb{E}}_{\substack{s\sim d^{\pi} \\a\sim \pi_{k}}}[-r(\theta)A^{\pi_k}_R (s,a)] + \kappa\sum_i \max\{0,\\
&  \quad \mathop{\mathbb{E}}_{\substack{s\sim d^{\pi} \\a\sim \pi_{k}}} \big  [r(\theta)A_{C_i}^{\pi_ k} (s,a) \big ] + (1-\gamma)(J_{C_i}(\pi_{k})-d_i)\}.
\end{aligned}
\label{cppo3}
$}
\end{equation}

Intuitively, the penalized term takes effect when the agent violates the corresponding constraint; otherwise, the objective equals the standard policy optimization when all constraints are fulfilled. Below, we theoretically analyze the equivalence between problem~\eqref{cppo2} and problem~(\ref{cppo3}), i.e., the ReLU operator constructs an exact penalty function with a finite penalty factor $\kappa$. To our best knowledge, it is the first work to solve the constrained policy optimization in the perspective of the exact penalty method. 

\begin{theorem}\label{exact-penalty}
Suppose $\bar\lambda$ is the corresponding Lagrange multiplier vector for the optimum of problem (\ref{cppo2}). Let the penalty factor $\kappa$ be a sufficiently large constant ($\kappa \geq ||\bar\lambda||_\infty$), problem (\ref{cppo2}) and problem (\ref{cppo3}) share the same optimal solution set.
\end{theorem}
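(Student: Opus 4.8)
The plan is to recognize Theorem~\ref{exact-penalty} as an instance of the classical $\ell_1$ exact penalty result and to establish the two set inclusions directly through the Lagrangian. Write the objective of \eqref{cppo2} as $f(\theta) = \mathbb{E}_{s\sim d^\pi, a\sim\pi_k}[-r(\theta)A_R^{\pi_k}(s,a)]$ and each constraint function as $g_i(\theta) = \mathbb{E}_{s\sim d^\pi, a\sim\pi_k}[r(\theta)A_{C_i}^{\pi_k}(s,a)] + (1-\gamma)(J_{C_i}(\pi_k)-d_i)$, so that the penalized objective of \eqref{cppo3} is $P_\kappa(\theta) = f(\theta) + \kappa\sum_i \max\{0, g_i(\theta)\}$ and the Lagrangian is $L(\theta,\lambda) = f(\theta) + \sum_i \lambda_i g_i(\theta)$. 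First I would invoke the convex structure that the surrogate inherits from the policy-space formulation \eqref{cppo1} (the expectations are linear in the policy and the constraints cut out a convex feasible set) together with a standard constraint qualification, so that strong duality holds and $\bar\lambda\ge 0$ is an optimal dual vector. The saddle-point relations then give that the optimum $\theta^*$ of \eqref{cppo2} globally minimizes $L(\cdot,\bar\lambda)$ and that complementary slackness holds, whence $L(\theta^*,\bar\lambda)=f(\theta^*)$.

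For the inclusion that every solution of \eqref{cppo2} solves \eqref{cppo3}, I would take any optimum $\theta^*$. Feasibility gives $\max\{0, g_i(\theta^*)\}=0$, hence $P_\kappa(\theta^*)=f(\theta^*)$. For an arbitrary $\theta$, the key chain of inequalities is $\bar\lambda_i g_i(\theta) \le \bar\lambda_i \max\{0, g_i(\theta)\} \le \|\bar\lambda\|_\infty \max\{0, g_i(\theta)\} \le \kappa\max\{0,g_i(\theta)\}$, valid since $\bar\lambda_i\ge0$ and $\kappa\ge\|\bar\lambda\|_\infty$. Summing over $i$ and using that $\theta^*$ minimizes $L(\cdot,\bar\lambda)$ yields $f(\theta^*) = L(\theta^*,\bar\lambda) \le L(\theta,\bar\lambda) \le f(\theta) + \kappa\sum_i \max\{0,g_i(\theta)\} = P_\kappa(\theta)$, so $\theta^*$ minimizes $P_\kappa$ and the two problems attain the same optimal value.

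For the reverse inclusion, I would let $\hat\theta$ minimize $P_\kappa$; since $\theta^*$ also minimizes $P_\kappa$, we have $P_\kappa(\hat\theta)=f(\theta^*)$. Reusing $L(\theta^*,\bar\lambda)\le L(\hat\theta,\bar\lambda)$ gives $f(\hat\theta) \ge f(\theta^*) - \|\bar\lambda\|_\infty\sum_i \max\{0,g_i(\hat\theta)\}$, whence $f(\theta^*)=P_\kappa(\hat\theta) \ge f(\theta^*) + (\kappa-\|\bar\lambda\|_\infty)\sum_i\max\{0,g_i(\hat\theta)\}$. This forces $(\kappa-\|\bar\lambda\|_\infty)\sum_i\max\{0,g_i(\hat\theta)\}\le0$; feasibility of $\hat\theta$ then follows, and $P_\kappa(\hat\theta)=f(\hat\theta)=f(\theta^*)$ certifies that $\hat\theta$ solves \eqref{cppo2}. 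Combining the two inclusions gives equality of the solution sets.

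The main obstacle I anticipate is the boundary case $\kappa=\|\bar\lambda\|_\infty$: there the factor $\kappa-\|\bar\lambda\|_\infty$ vanishes, so the final step no longer forces zero constraint violation and an infeasible minimizer of $P_\kappa$ could a priori survive. I expect to close this either by sharpening to a strict inequality $\kappa>\|\bar\lambda\|_\infty$ for this direction, or by a strict-complementarity / uniqueness-of-active-set argument combined with convexity to rule out infeasible penalty minimizers. A secondary subtlety is making the convexity and strong-duality structure rigorous: the surrogate \eqref{cppo2} is convex in the non-parametric policy space where the saddle-point relations above hold exactly, so over the neural-network parameter $\theta$ the statement is best read as exactness over the induced set of policies rather than over the raw parameters.
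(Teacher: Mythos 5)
Your proposal is correct and follows essentially the same route as the paper's own proof: both directions are established through the Lagrangian, using feasibility and complementary slackness for the forward inclusion and the bound $\bar\lambda_i g_i(\theta) \le \kappa\max\{0,g_i(\theta)\}$ plus the fact that the primal optimum minimizes $L(\cdot,\bar\lambda)$ for the reverse one. The two caveats you flag --- the degenerate boundary case $\kappa = \|\bar\lambda\|_\infty$, where the reverse inclusion no longer forces feasibility, and the need for a genuine saddle-point (convexity) structure so that the KKT point globally minimizes the Lagrangian --- are equally present but left implicit in the paper's argument, so your treatment is, if anything, the more careful of the two.
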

\begin{proof}
See the supplemental material.
\end{proof}

Notably, the finiteness of penalty factor $\kappa$ in  Theorem~\ref{exact-penalty} is critical for policy updates because popular methods like square loss function and log-barrier function require the penalty term to go towards positive infinity for an infeasible solution. Otherwise, the optimal solution is not exactly the same as problem (\ref{cppo2})~\cite{liu2020ipo}. However, this would bring about numerical issues in practice and result in unbounded approximate error which will be discussed later.

By now, we construct an equivalent unconstrained objective with ReLU operators and a finite penalty factor $\kappa$. It is still intractable to solve problem (\ref{cppo3}) directly over the unknown future state distribution $s\sim d^\pi$ except for cumbersome off-policy evaluation~\cite{jiang2016doubly} from current policy $\pi_k$. We replace $s \sim d^\pi$ with $s\sim d^{\pi_k}$ and minimize the unconstrained objective over collected trajectories:
\begin{equation}
\resizebox{.91\linewidth}{!}{$
\begin{aligned}
\theta_{k+1} \!\!& =  \mathop{\arg\min}_{\theta} \mathop{\mathbb{E}}_{\substack{s\sim d^{\pi_k} \\a\sim \pi_{k}}}[-r(\theta)A^{\pi_k}_R (s,a)] + \kappa\sum_i \max\{0,\\
&  \quad \mathop{\mathbb{E}}_{\substack{s\sim d^{\pi_k} \\a\sim \pi_{k}}} \big  [r(\theta)A_{C_i}^{\pi_ k} (s,a) \big ] + (1-\gamma)(J_{C_i}(\pi_{k})-d_i)\}.
\end{aligned}
\label{cppo4}
$}
\end{equation}
This results in biases w.r.t problem (\ref{cppo3}), and we further provide the analysis of approximate error for that replacement.
\begin{algorithm}[tb]
\caption{P3O: Penalized Proximal Policy Optimization} 
\label{algo:cppo}
\textbf{Input}: initial policy $\pi_{\theta_0}$, value function $V^{\phi_0}_R$ and each cost-value function $V^{\psi^i_0}_{C_i}, \forall i.$
\begin{algorithmic}[1] 
\FOR{$ k\mathrm{\  in\ } 0,1,2,...K$ }
        	\STATE Collect set of trajectories $\mathcal{D}_k$ with policy $\pi_{\theta_k}$.
        	\STATE Compute $\hat{R}_t = \sum^{T-t}_{k=0} \gamma^k R_{t+k}$ and $\hat{A}^{\pi_{\theta_k}}_R(s_t,a_t)$.
        	\STATE Compute $\hat{C}^i_t = \sum^{T-t}_{k=0} \gamma^k C^i_{t+k}$ and $\hat{A}^{\pi_{\theta_k}}_{C_i}(s_t,a_t), \forall i$.
            \STATE Update $\pi_{\theta_{k+1}}$ using Algorithm \ref{algo:exact}.
            \STATE $\phi_{k+1} = \mathop{\arg\min}_{\phi} \frac{1}{|\mathcal{D}_k| T}\sum(V^{\phi_k}_R(s_t)-\hat{R}_t)^2$.
            \STATE $\phi'_{k+1} = \mathop{\arg\min}_{\phi'} \frac{1}{|\mathcal{D}_k| T}\sum(V^{\psi^i_k}_{C_i}(s_t)-\hat{C}^i_t)^2, \forall i$.
          \ENDFOR
\end{algorithmic}
\textbf{Output}: Optimal policy $\pi_{\theta_K}$.
\end{algorithm}

\begin{algorithm}[tb]
\caption{Exact Penalized Policy Search Algorithm} 
\label{algo:exact}
\textbf{Input}: current policy $\pi_{\theta_k}$.$\quad \quad \quad$
\begin{algorithmic}[1] 
\FOR{$ n \mathrm{\  in\ } 0,1,2,...N$ }
            \STATE Compute $\mathcal{L}^{\mathrm{CLIP}}_{R}(\theta)$ in Eq.~(\ref{L_R_CLIP}).
            \STATE Compute $\mathcal{L}^{\mathrm{CLIP}}_{C_i}(\theta),\ \forall i = 1,2...m $ in Eq.~(\ref{L_C_CLIP}).
            \STATE $\theta \leftarrow \theta  - \eta \nabla  \mathcal{L}^{\mathrm{P3O}}(\theta)$ in Eq.~(\ref{cppo5}).
            \STATE $\kappa \leftarrow \min\{\rho \kappa, \kappa_\mathrm{max}\} \quad (\rho > 1)$.\\
            \IF{$\mathop{\mathbb{E}}_{s\sim d^{\pi}}\big[\mathrm{D}_{KL}(\pi_\theta || \pi_{\theta_k})[s]\big] \notin [\delta^-_k,\delta^+_k]$}
                \STATE Break.\\
            \ENDIF
          \ENDFOR
\STATE $\theta_{k+1} \leftarrow \theta$.
\end{algorithmic}
\textbf{Output}: next policy $\pi_{\theta_{k+1}}$.
\end{algorithm}



\begin{figure*}
  \centering
  \subcaptionbox{Circle\label{fig:circle}}
    {\includegraphics[width=3.75cm,height=2.75cm]{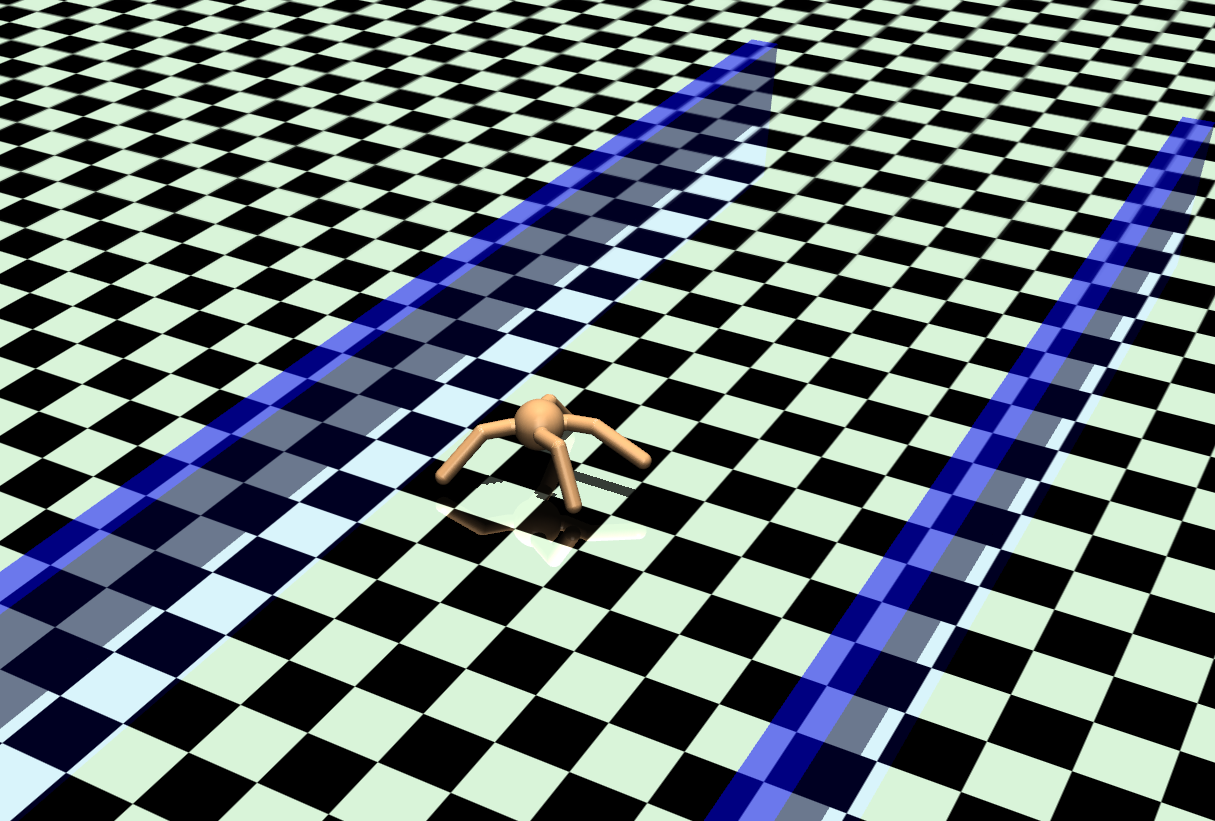}}\hspace{5mm}
  \subcaptionbox{Gather\label{fig:gather}}
    {\includegraphics[width=3.75cm,height=2.75cm]{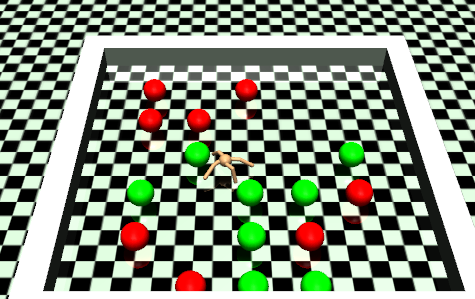}}\hspace{5mm}
  \subcaptionbox{Navigation\label{fig:multi}}
    {\includegraphics[width=3.75cm,height=2.75cm]{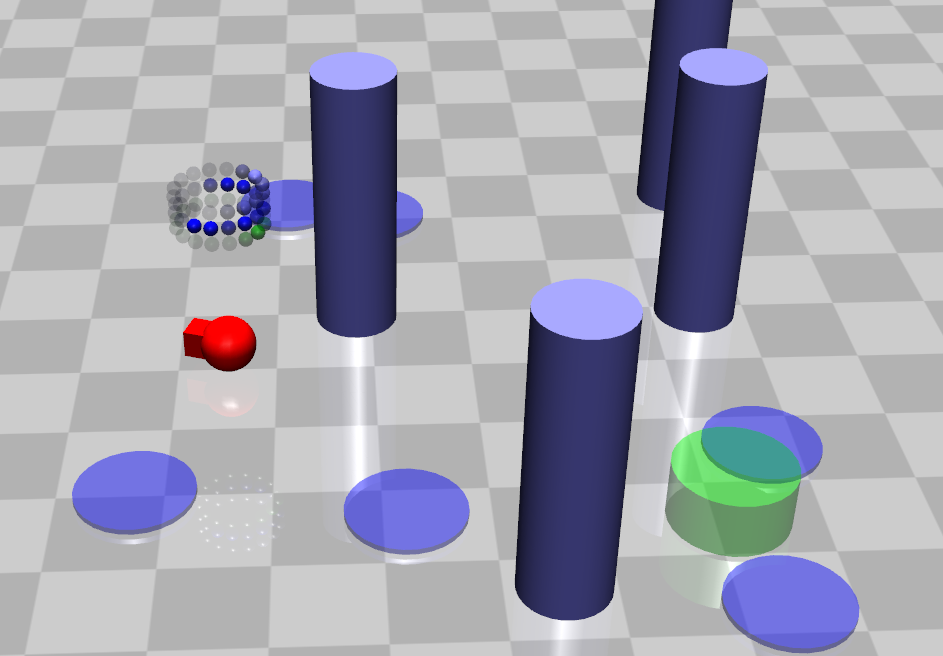}}\hspace{5mm}
\subcaptionbox{Simple Spread\label{fig:multiagent}}
    {\includegraphics[width=3.5cm,height=2.75cm]{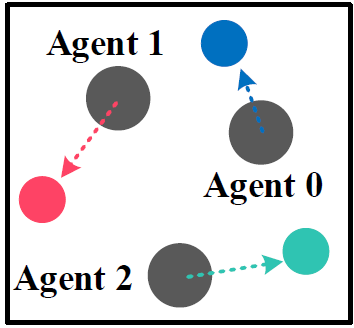}}
  \caption{Experimental benchmarks. (a) Circle: The agent is rewarded for moving in a specified wide circle, but is constrained to stay within a safe region smaller than the radius of the circle. (b) Gather: The agent is rewarded for gathering green apples, but is constrained to avoid red bombs. (c) Navigation: The agent is rewarded for reaching the target area(green) but is constrained to avoid virtual hazards(light purple) and impassible pillars(dark purple). Note that the cost for hazards and pillars are calculated separately and hold different upper limits. (d) Simple Spread: Agents are rewarded for reaching corresponding destinations, but are constrained to the mutual collision. The observation of each agent is not shared in the CMDP execution.}
  \label{fig:exp_benchmarks}
\end{figure*}

\begin{theorem}\label{proximal-error}
For any policy $\pi$ and the given $\pi_k$, define \[\delta = \mathop{\mathbb{E}}_{s\sim d^{\pi_k}}\big[\mathrm{D}_{KL}(\pi || \pi_k)[s]\big].\]
The worst-case approximate error is $\mathcal{O}(\kappa m\sqrt{\delta})$  if we replace problem (\ref{cppo3}) with problem (\ref{cppo4}).
\end{theorem}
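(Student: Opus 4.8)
The plan is to exploit the observation that problems~\eqref{cppo3} and~\eqref{cppo4} differ \emph{only} in the state distribution inside every expectation: \eqref{cppo3} averages over $s\sim d^{\pi}$ while \eqref{cppo4} averages over $s\sim d^{\pi_k}$, whereas the action distribution $a\sim\pi_k$, the importance ratio $r(\theta)$, the advantage estimates, and the constant offsets $(1-\gamma)(J_{C_i}(\pi_k)-d_i)$ are identical in both formulations. Consequently the whole approximate error collapses to controlling, for a bounded test function $g$, the single quantity $|\mathbb{E}_{s\sim d^{\pi}}[g(s)]-\mathbb{E}_{s\sim d^{\pi_k}}[g(s)]|$.

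First I would rewrite each inner expectation using the importance-sampling identity $\mathbb{E}_{a\sim\pi_k}[r(\theta)A^{\pi_k}(s,a)]=\mathbb{E}_{a\sim\pi}[A^{\pi_k}(s,a)]=:\bar{A}(s)$, so that the reward term and every cost term become state-expectations of functions $\bar{A}_R$ and $\bar{A}_{C_i}$ that are uniformly bounded by $\epsilon_R:=\max_s|\bar{A}_R(s)|$ and $\epsilon_{C_i}:=\max_s|\bar{A}_{C_i}(s)|$. By the $L^\infty/L^1$ duality bound, $|\mathbb{E}_{d^\pi}[\bar{A}]-\mathbb{E}_{d^{\pi_k}}[\bar{A}]|\le 2\,\|\bar{A}\|_\infty\,\mathrm{D}_{TV}(d^\pi,d^{\pi_k})$, which isolates the state-visitation shift $\mathrm{D}_{TV}(d^\pi,d^{\pi_k})$ as the one object left to estimate.

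Next I would invoke the standard discounted-visitation perturbation bound from the TRPO/CPO analysis, $\mathrm{D}_{TV}(d^\pi,d^{\pi_k})\le\tfrac{\gamma}{1-\gamma}\,\mathbb{E}_{s\sim d^{\pi_k}}[\mathrm{D}_{TV}(\pi\|\pi_k)[s]]$, and then convert the policy divergence into $\delta$ by Pinsker's inequality followed by Jensen's inequality: $\mathbb{E}_{s\sim d^{\pi_k}}[\mathrm{D}_{TV}(\pi\|\pi_k)[s]]\le\mathbb{E}_{s\sim d^{\pi_k}}[\sqrt{\tfrac12\mathrm{D}_{KL}(\pi\|\pi_k)[s]}]\le\sqrt{\tfrac12\,\delta}$. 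Combining these yields $\mathrm{D}_{TV}(d^\pi,d^{\pi_k})=\mathcal{O}(\sqrt{\delta})$ for fixed $\gamma$.

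Finally I would assemble the pieces. The reward term contributes a single error of order $\epsilon_R\sqrt{\delta}$; each of the $m$ cost terms contributes at most $\kappa\,\epsilon_{C_i}\sqrt{\delta}$, using that $\max\{0,\cdot\}$ is $1$-Lipschitz so the outer clipping cannot amplify the inner perturbation and the constant offsets cancel exactly. Summing the $m$ cost contributions with the reward contribution gives a total bound of order $(1+\kappa m)\,\epsilon\,\sqrt{\delta}=\mathcal{O}(\kappa m\sqrt{\delta})$ with $\epsilon:=\max\{\epsilon_R,\max_i\epsilon_{C_i}\}$. I expect the main obstacle to be the state-visitation perturbation step: establishing that $\mathrm{D}_{TV}(d^\pi,d^{\pi_k})$ is governed by the \emph{average} policy TV-distance requires the visitation-coupling argument from TRPO/CPO, and care is needed that its $1/(1-\gamma)$ factor interacts cleanly with the explicit $(1-\gamma)$ weighting inside the cost terms so that the advertised rate $\mathcal{O}(\kappa m\sqrt{\delta})$ genuinely holds.
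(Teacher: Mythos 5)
Your proposal is correct and follows essentially the same route as the paper's proof: decompose the objective difference term by term, use the $1$-Lipschitzness of $\max\{0,\cdot\}$ so the ReLU cannot amplify the per-constraint error, bound each term's state-distribution shift by $\mathcal{O}\big(\varepsilon\sqrt{\delta}\big)$, and sum to get $(1+\kappa m)\,\mathcal{O}(\sqrt{\delta})=\mathcal{O}(\kappa m\sqrt{\delta})$. The only difference is that the paper imports the per-term bound as a black-box lemma from \cite{achiam2017constrained}, whereas you re-derive it inline via TV duality, the discounted-visitation perturbation bound, and Pinsker's plus Jensen's inequalities --- which is exactly how that cited lemma is proved.
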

\begin{proof}
See the supplemental material.
\end{proof}

Theorem \ref{proximal-error} indicates the necessity of a finite penalty factor $\kappa$, or the approximate error would not be controlled.  On the other hand, we have to restrict policy updates within the neighborhood of last $\pi_k$ (i.e., $\delta \rightarrow 0$) and improve the policy iteratively. Here, we incorporate the trust-region constraint via clipped surrogate objective \cite{schulman2017proximal} in the approximate exactly penalized problem \eqref{cppo4} to guarantee a proximal policy iteration. 
Then, the final optimization objective, abbreviated as P3O (penalized PPO), is derived as:
\begin{equation}
\mathcal{L}^\mathrm{P3O}(\theta) =   \mathcal{L}^{\mathrm{CLIP}}_R(\theta) + \kappa\cdot \sum_i \max\{0,\mathcal{L}^{\mathrm{CLIP}}_{C_i}(\theta) \},
\label{cppo5}
\end{equation}
 where
\begin{equation}
\begin{aligned}
\mathcal{L}^{\mathrm{CLIP}}_{R}(\theta) = & \mathop{\mathbb{E}}_{\substack{s\sim d^{\pi_k} \\a\sim \pi_{k}}}\big [ -\min\big \{  r(\theta)  A_R^{\pi_k} (s,a),\\ 
& \mathrm{clip} (r(\theta),  1-\epsilon,1+\epsilon) A_R^{\pi_k} (s,a) \big \} \big ] \ ;
\end{aligned}
\label{L_R_CLIP}
\end{equation}
\begin{equation}
\begin{aligned}
& \mathcal{L}^{\mathrm{CLIP}}_{C_i}(\theta) = \mathop{\mathbb{E}}_{\substack{s\sim d^{\pi_k} \\a\sim \pi_{k}}}\big  [ \max\big \{ r(\theta) A_{C_i}^{\pi_k} (s,a),
\mathrm{clip} (r(\theta), \\ & 1-\epsilon,1+\epsilon)A_{C_i}^{\pi_k} (s,a) \big \}+ (1-\gamma)(J_{C_i}(\pi_{k})-d_i)\big].
\end{aligned}
\label{L_C_CLIP}
\end{equation}

Based on the clipped surrogate objective technique, we solve the difficult constrained problem (\ref{cppo2}) by alternately estimating and minimizing the unconstrained loss function (\ref{cppo5}).

The remaining problem is how to obtain the finite penalty factor $\kappa$ due to the variety of the magnitude of $\mathcal{L}^\mathrm{CLIP}_{R}(\theta) $ and each $\mathcal{L}^{\mathrm{CLIP}}_{C_i}(\theta)$ that depends on different tasks and up-to-date policies. 
The value of $\kappa$ is required to be larger than the unknown greatest Lagrange multiplier according to Theorem~\ref{exact-penalty}, but Theorem~\ref{proximal-error} implies too large $\kappa$ decays the performance. Thus, there is a trade-off among the hyper-parameter tuning. As shown in Algorithm \ref{algo:exact}, we increase $\kappa$ at every time step, and the early stopping condition is fulfilled when the distance between solutions of two adjacent steps is small enough or the current policy is out of the trust region. In practice, we utilize the normalization trick that maps the advantage estimation to an approximate standard normal distribution regardless of the tasks themselves. We find this technique enables a fixed $\kappa$ for general good results across different tasks. Experimental results show that both of above algorithms work effectively and the learning processes are stable in a wide range of $\kappa$.

Note that the first term in the P3O objective (\ref{cppo5}) is the same as standard PPO and the rest terms are the clipped constraint objectives that prevent the new policy from jumping out of the trust region during the penalty reduction. 
The loss function in P3O is differentiable almost everywhere and can be solved easily via the simple first-order optimizer~\cite{kingma2014adam}. Conclusively,  P3O inherits the benefits of PPO in CMDPs such as better accuracy and sample efficiency than the approximating quadratic optimization~\cite{achiam2017constrained,yang2020projection}. Additionally, the P3O algorithm is naturally scalable to multi-constraint scenarios by adding penalized terms onto the objective (\ref{cppo5}). The pseudo-code of the P3O algorithm for solving general multi-constraint CMDPs is shown in Algorithm \ref{algo:cppo}.
In the end, we extend the P3O algorithm to the decentralized multi-agent setting in which the CMDP is partially observable and the observation can not be shared between each agent. Due to the space limitation, detailed algorithms are provided in the supplementary material.

\section{Experiments}\label{sec:5}
\begin{figure*}
  \centering
  \includegraphics[width=0.85\linewidth]{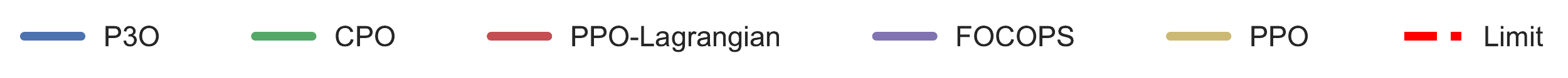}
   \subcaptionbox{Episode Return(AntCircle)\label{fig:circle-epret}}
    {\includegraphics[width=0.24\linewidth,trim=10 15 0 0,clip]{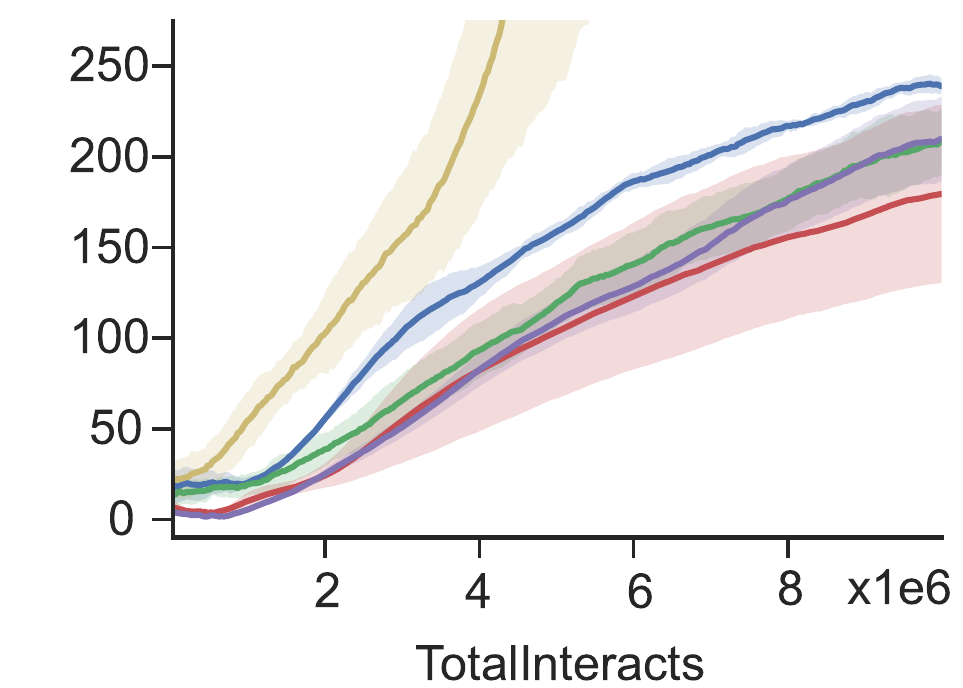}}
      \subcaptionbox{Episode Cost(AntCircle)\label{fig:circle-epcost}}
    {\includegraphics[width=0.24\linewidth,trim=10 15 0 0,clip]{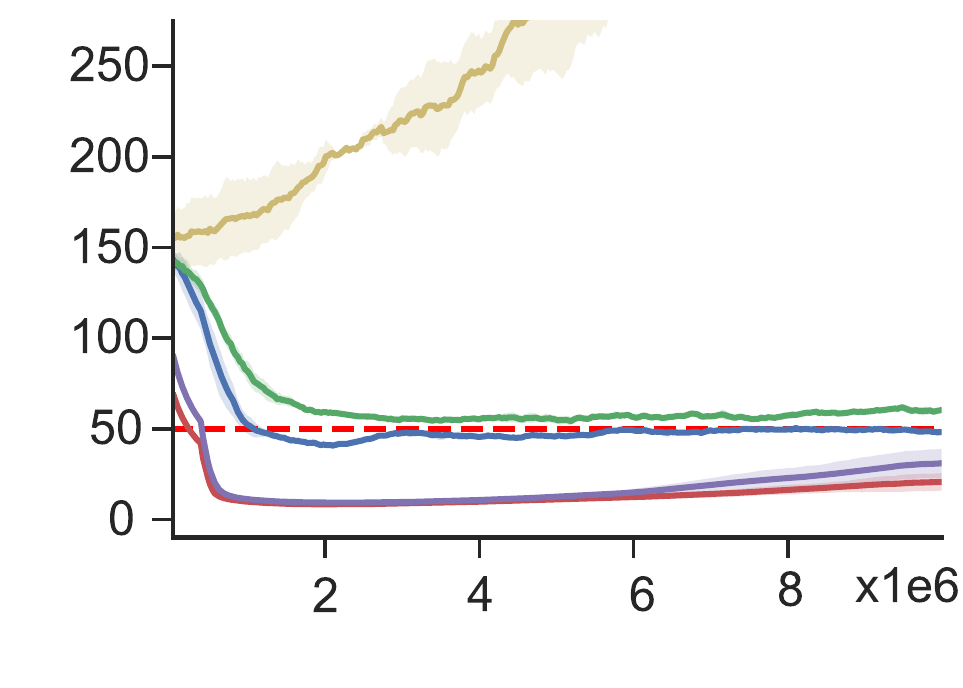}}
    \subcaptionbox{Episode Return(PointGather)\label{fig:gather-epret}}
    {\includegraphics[width=0.24\linewidth,trim=10 15 0 0,clip]{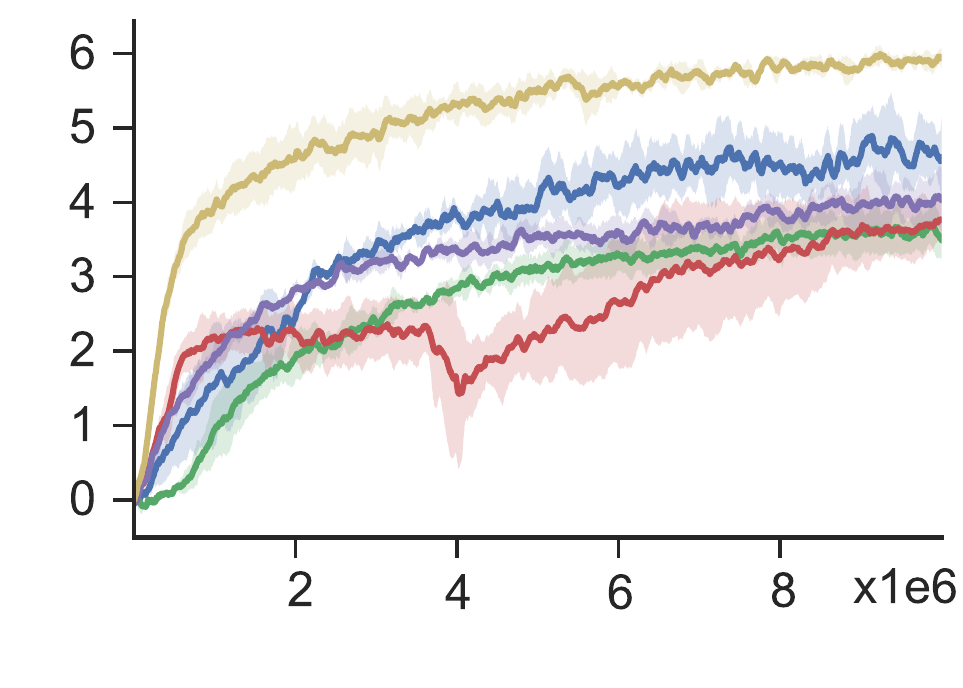}}
      \subcaptionbox{Episode Cost(PointGather)\label{fig:gather-epcost}}
    {\includegraphics[width=0.24\linewidth,trim=10 15 0 0,clip]{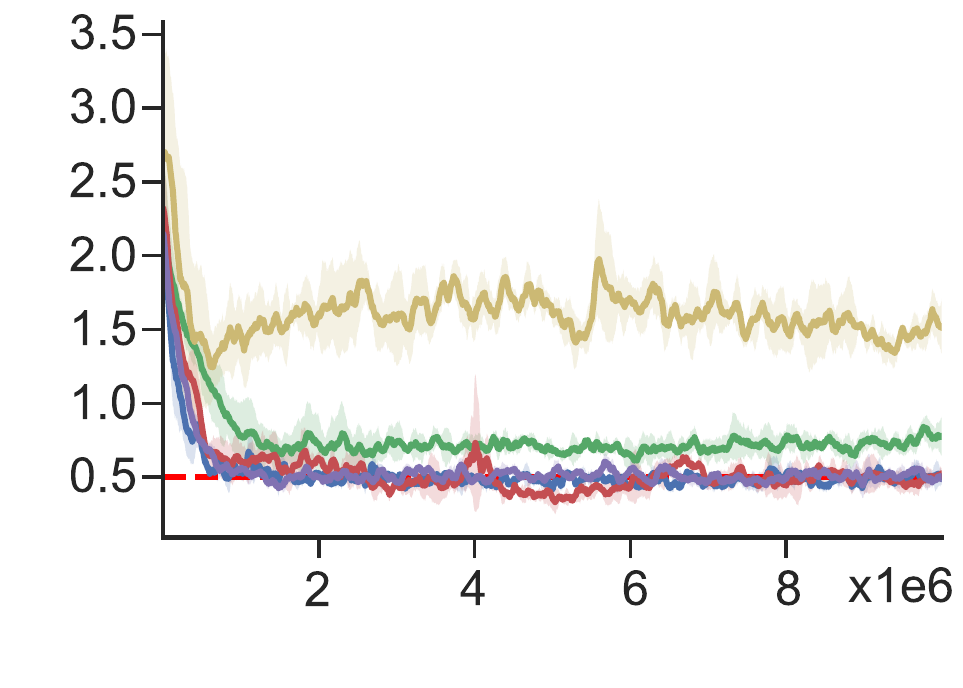}}

  \caption{Average episode return and cost in the single-constraint scenario. The x-axis is the number of interactions with the emulator. The y-axis is the average reward/cost-return. The solid line is the mean and the shaded area is the standard deviation. Each tested algorithm runs over five different seeds. The dashed line in the cost plot is the constraint threshold which is 50 for Circle and 0.5 for Gather.}
  \label{fig:single_result}
\end{figure*}
In this section, we empirically demonstrate the efficacy of the P3O algorithm from the following four perspectives:
\begin{itemize}
    \item P3O outperforms the state-of-the-art algorithms, e.g. CPO~\cite{achiam2017constrained}, PPO-Lagrangian~\cite{ray2019benchmarking} and FOCOPS~\cite{zhang2020first} in safe RL benchmarks with a single constraint.

    \item P3O is robust to more stochastic and complex environments~\cite{ray2019benchmarking} where previous methods fail to simultaneously improve the return effectively and satisfy the constraint strictly.
    
    \item P3O introduces only one additional hyper-parameter (i.e., the penalty factor $\kappa$), which is easy to tune and insensitive to different settings and constraint thresholds.
    
    \item P3O can be extended to multi-constraint and multi-agent scenarios that are barely tested in the previous work.
\end{itemize}

\begin{table*}
\centering
\small
\begin{tabular}{cccccc}
\hline
\!\!\!Task \!\!\!&  & P3O (Ours) &\!\!\! CPO~\cite{achiam2017constrained} & \!\!\! PPO-L~\cite{ray2019benchmarking} & \!\!\! FOCOPS~\cite{zhang2020first}\!\!\\
\hline
\multirow{2}*{\!\!\!AntCircle} & Reward & $253.92\pm 11.67$& $214.45\pm 26.32$& $189.82\pm52.12$ & $232.08\pm 19.87$\\
~ &\!\!\! Cost($\le50$) & $46.18\pm7.49$ & $57.66\pm9.42$ &$27.71\pm7.33$ & $34.50\pm11.21$\\
\hline
\multirow{2}*{\!\!\!PointGather} & Reward & $4.57\pm0.54$& $3.11\pm0.29$ &$3.49\pm0.33$ & $3.14\pm0.79$ \\
~ &\!\!\! Cost($\le 0.5$) & $0.51\pm0.10$& $0.73\pm0.17$ & $0.48\pm0.09$& $0.51\pm0.14$\\
\hline
\end{tabular}
\caption{Mean performance with normal 95\% confidence interval on single-constraint tasks.}
\label{tab:perf}
\end{table*}
\begin{figure}
      \centering
      ~~~~~\includegraphics[width=0.95\linewidth]{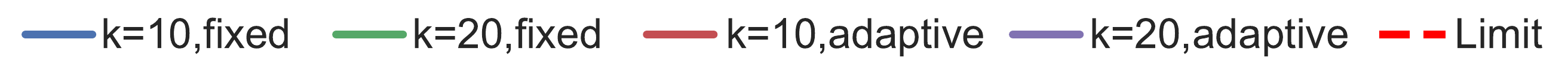}
      \subcaptionbox{Episode Return\label{fig:sa-epret}}
        {\includegraphics[width=0.45\linewidth,trim=10 15 0 0,clip]{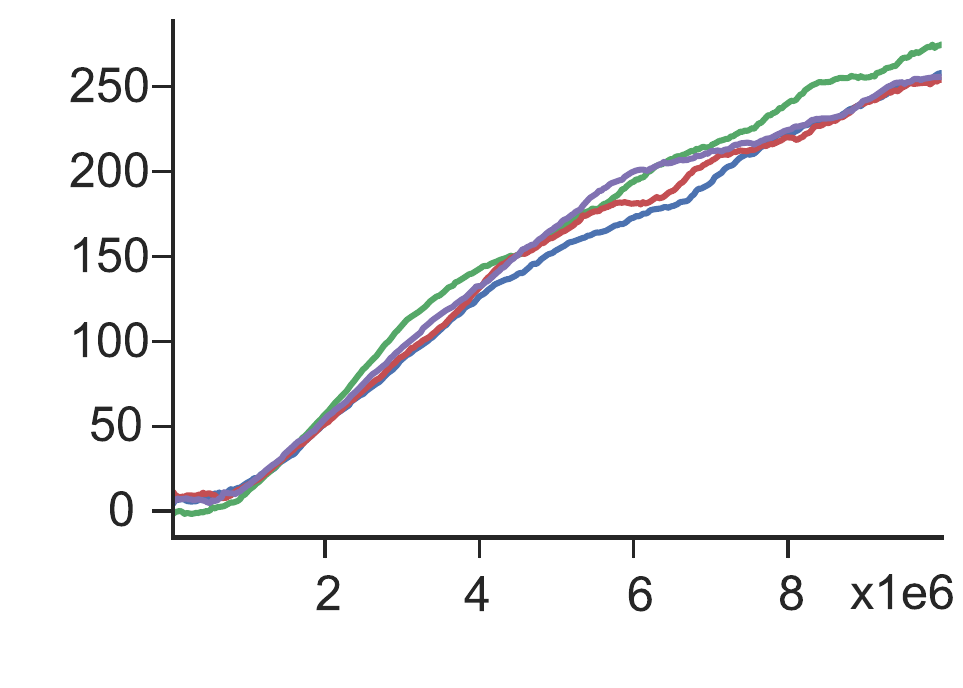}}
      \subcaptionbox{Episode Return\label{fig:sa-epcost}}
        {\includegraphics[width=0.45\linewidth,trim=10 15 0 0,clip]{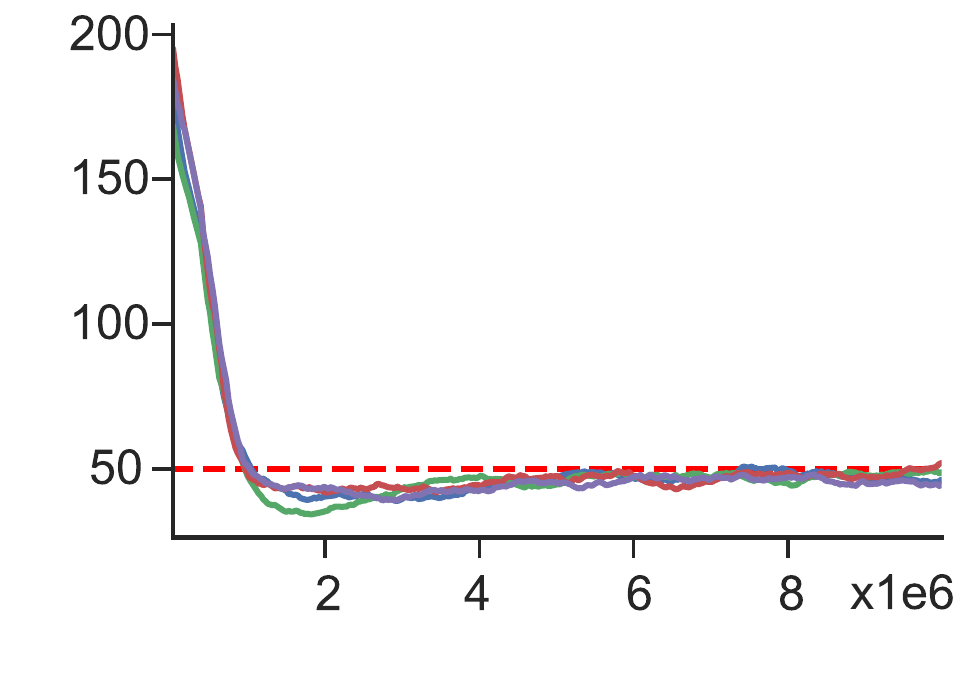}}
      \caption{Performance of P3O for different $\kappa$ settings on AntCircle.}
      \label{fig:sa_result}
\end{figure}

We design and conduct experiments in 2 single-constraint (\textit{Circle} and \textit{Gather}), 1 multi-constraint (\textit{Navigation}) and 1 multi-agent (\textit{Simple Spread}) safe RL environments respectively, as illustrated in Figure \ref{fig:exp_benchmarks}.

To be fair in comparison, the proposed P3O algorithm and FOCOPS~\cite{zhang2020first} are implemented with same rules and tricks on the code-base of \citeauthor{ray2019benchmarking}\shortcite{ray2019benchmarking}\footnote{https://github.com/openai/safety-starter-agents} for benchmarking safe RL algorithms. Also, we take standard PPO as the reference which ignores any constraint and serves as an upper bound baseline on the reward performance. 

More information about experiment environments and detailed parameters are provided in the supplementary material.

\paragraph{Single-Constraint Scenario.} 

The numerical results of different algorithms are listed in Table \ref{tab:perf}.
The proposed P3O algorithm has +15\%, +24\%, +9\% higher reward improvement over CPO, PPO-Lagrangian, FOCOPS on the AntCircle task, and +34\%, +23\%, +31\% on the PointGather task respectively. Meanwhile, P3O converges to the upper limit of the safety constraint more tightly which enlarges the parametric search space, as well as satisfies the hard constraint.

As illustrated in Figure \ref{fig:single_result}, P3O is the best for improving the policy while satisfying the given constraint. Conversely, CPO has more constraint violations and even fails to satisfy the constraint after convergence possibly due to Taylor's approximation. This is especially prominent in complex environments like SafetyGym~\cite{ray2019benchmarking}  shown in Figure~\ref{fig:safetygym_result}. The non-negligible approximate error prevents the CPO agent from fully ensuring constraints on virtually over these environments whereas it enjoys a meaningless high return. 
As for PPO-Lagrangian, it is too conservative for the trade-off between reward improvement and constraint satisfaction. Besides, it fluctuates mostly on the learning curves which means it is less steady on the learning process. FOCOPS is better-performed than the two baselines above. However, the hyper-parameter $\lambda$ takes a great effect on the final performance. At last, both PPO-Lagrangian and FOCOPS are more or less sensitive to the initial value as well as the learning rate of the Lagrange multiplier. Improper hyper-parameters would cause catastrophic performance on reward and constraint value.

\begin{figure}
      \centering
      \includegraphics[width=0.8\linewidth]{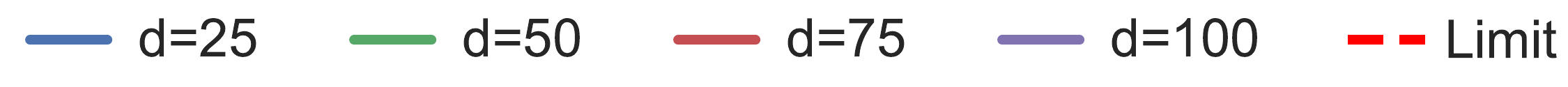}
      \subcaptionbox{Episode Return\label{fig:sac-epret}}
        {\includegraphics[width=0.45\linewidth,trim=10 15 0 0,clip]{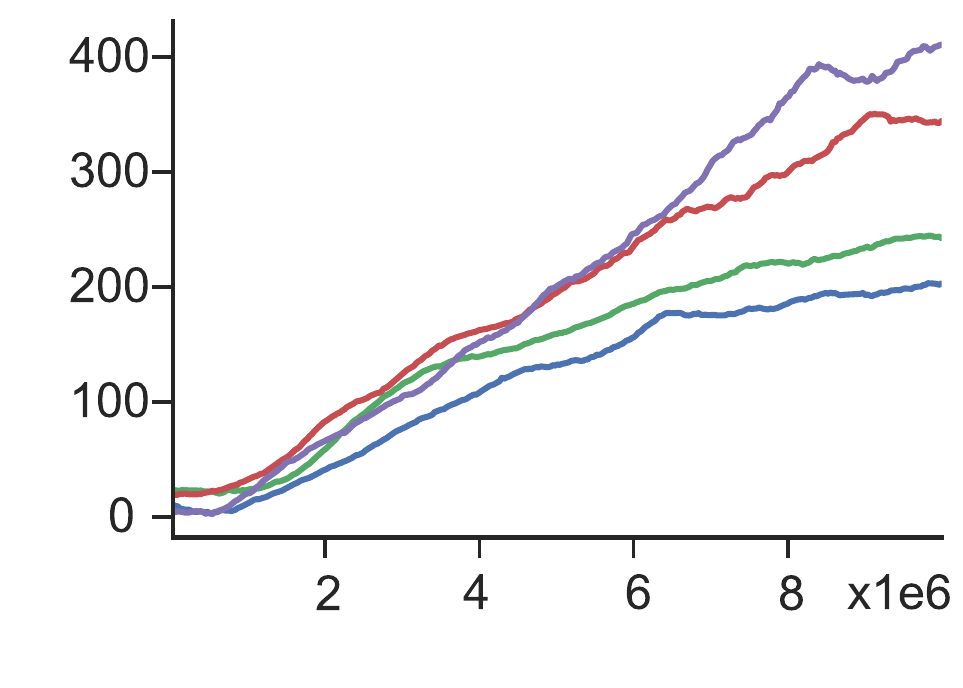}}
      \subcaptionbox{Episode Return\label{fig:sac-epcost}}
        {\includegraphics[width=0.45\linewidth,trim=10 15 0 0,clip]{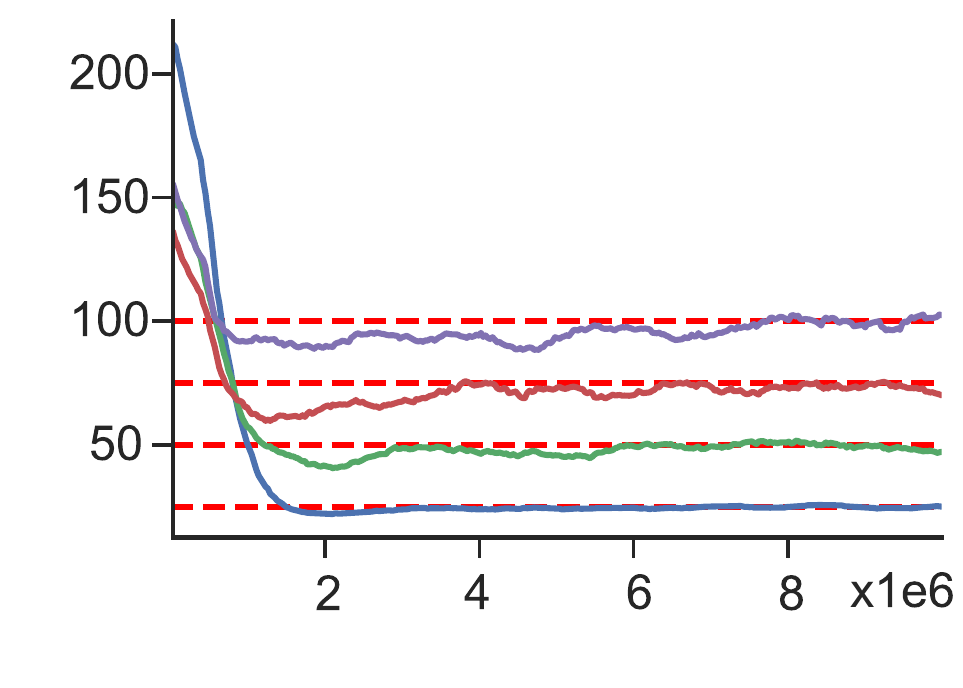}}
      \caption{Performance of P3O for different cost limit $d$ on AntCircle.}
      \label{fig:sac_result}
\end{figure}

\begin{figure*}
  \centering
  \includegraphics[width=0.76\linewidth]{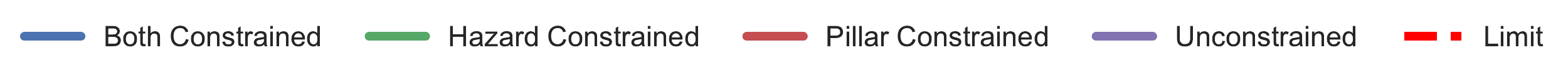}
  \subcaptionbox{Episode Return\label{fig:multi-epret}}
    {\includegraphics[width=0.24\linewidth,trim=10 15 0 0,clip]{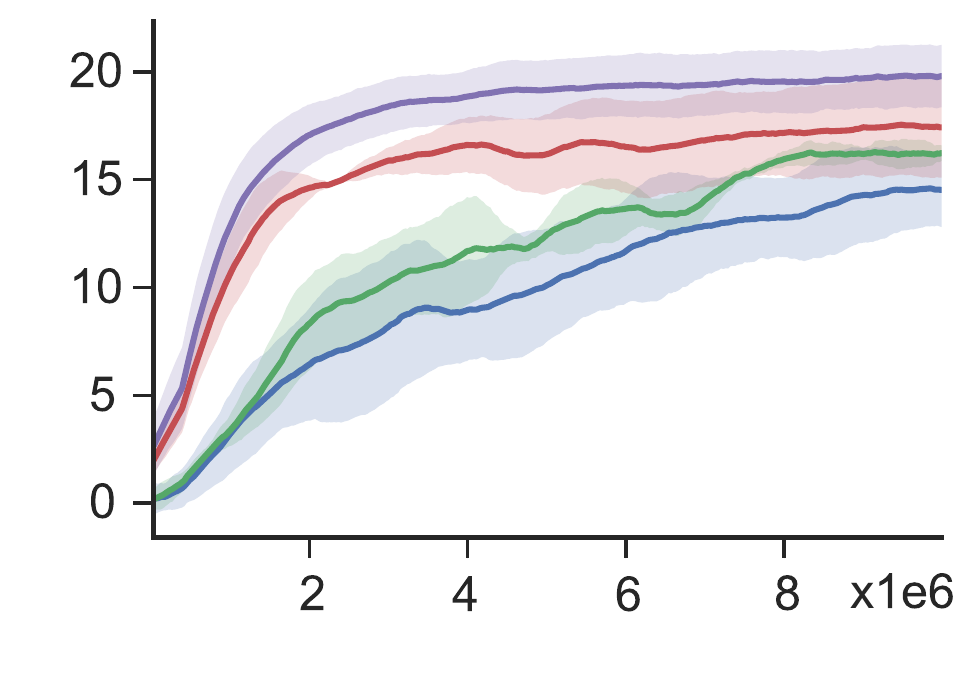}}\hspace{5mm}
  \subcaptionbox{Episode Cost for hazards\label{fig:multi-epcost1}}
    {\includegraphics[width=0.24\linewidth,trim=10 15 0 0,clip]{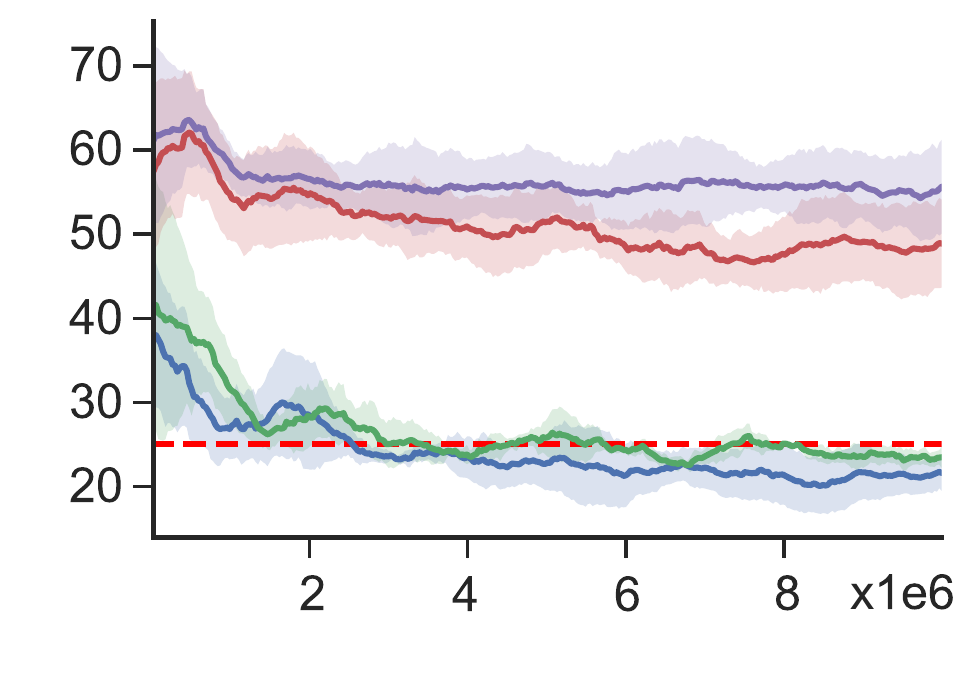}}\hspace{5mm}
  \subcaptionbox{Episode Cost for pillars\label{fig:multi-epcost2}}
    {\includegraphics[width=0.24\linewidth,trim=10 15 0 0,clip]{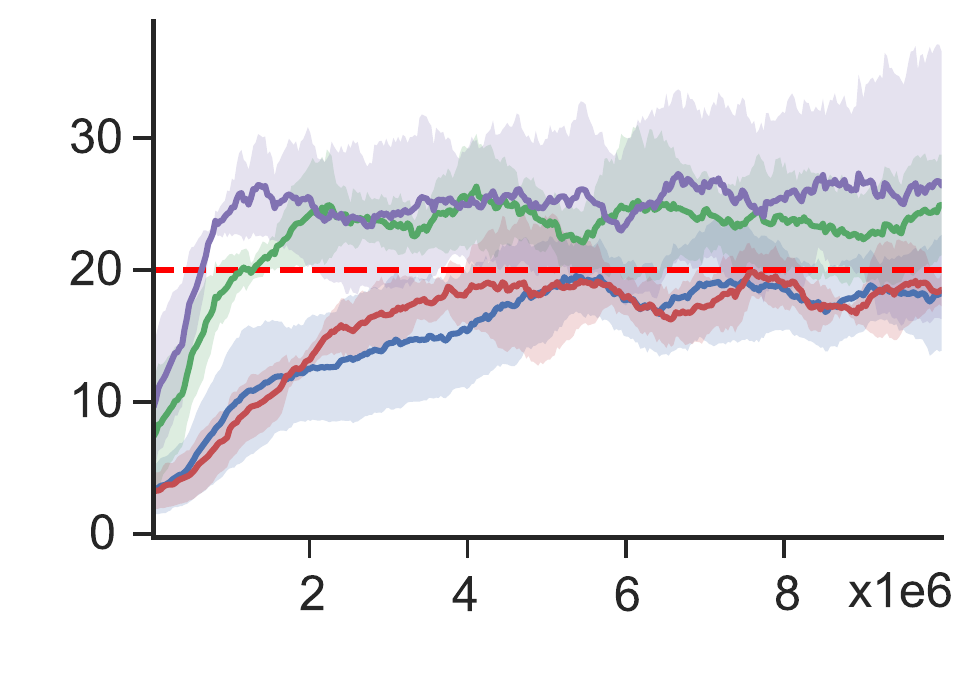}}
  \caption{Average episode return(left), cost1(center, for hazards) and cost2(right, for pillars) in the multiple-constraint scenario. The dashed line in the cost plot is the constraint threshold which is 25 for cost1 and  20 for cost2. Hazard/Pillar constrained means only taking cost1/cost2 into P3O loss function whereas ignoring the other one. }
  \label{fig:multi_result}
\end{figure*}
\paragraph{Sensitivity Analysis.}
The penalty factor $\kappa$ plays a critical role in P3O, 
 which is supposed to vary across different tasks depending on the specific value of $A_R$ and each $A_{c_i}$. As shown in Algorithm ~\ref{algo:exact}, we can increase $\kappa$ at every time steps. Otherwise we can map the advantage estimation to an approximate standard normal distribution by applying advantage normalizing tricks and use a fixed $\kappa$ for general good results. The experimental results are stable in a wide range of $\kappa$, as illustrated in Figure~\ref{fig:sa_result}.

We further verify that P3O works effectively for different threshold levels by changing the cost limit $d$. Figure~\ref{fig:sac_result} shows P3O can effectively learn constraint-satisfying policies for all the cases. Notably, P3O precisely converges to the cost upper limit. By contrast, the baseline algorithms are too conservative for policy updates and have more overshoot in the process of cost violation reduction.

\begin{figure}
      \centering
      ~~~~~~\includegraphics[width=0.9\linewidth]{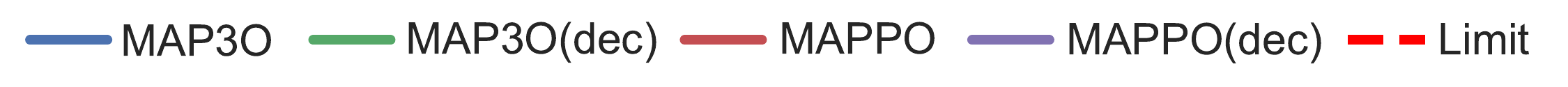}
      \subcaptionbox{Episode Return\label{fig:mpe-epret}}
        {\includegraphics[width=0.45\linewidth,trim=10 15 0 0,clip]{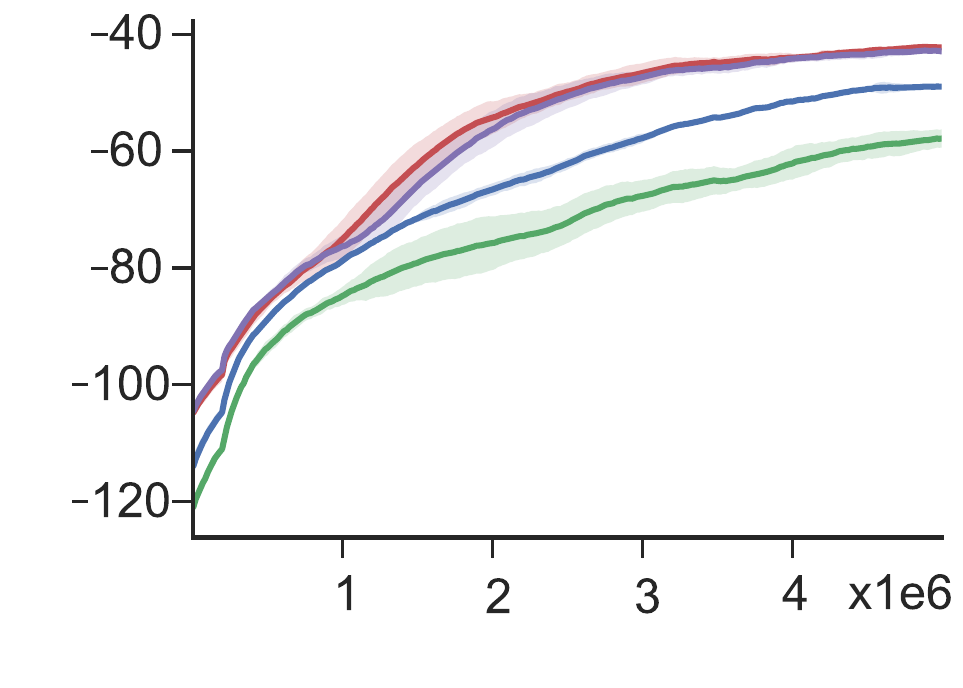}}
      \subcaptionbox{Episode Return\label{fig:mpe-epcost}}
        {\includegraphics[width=0.45\linewidth,trim=10 15 0 0,clip]{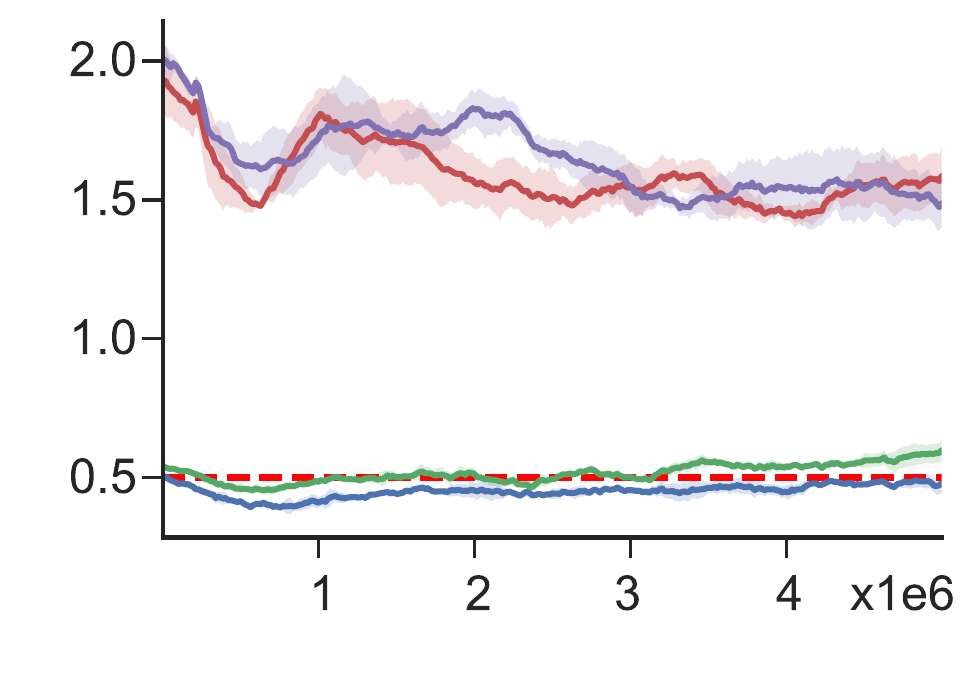}}
      \caption{Average episode return and cost in the multi-agent scenario. The algorithm is decentralized with notation (dec).}
      \label{fig:mpe_result}
\end{figure}

\paragraph{Multi-Constraint Scenario.}

It is convenient to extend P3O to multi-constraint scenarios by simply adding exact penalty functions with the minor modification on the loss function. 

Learning curves of average episode return and two independent costs for the navigation task are shown in Figure \ref{fig:multi_result}. The unconstrained algorithm is the original PPO. The hazard (pillar) constrained algorithm only takes the cost of hazard (pillar) into P3O loss function whereas ignoring the other one. It can be seen that those single-constrained algorithms maintain the specified constraint under its upper limit but fail to control the other one. When both constraints are incorporated into our algorithm, each one is well satisfied at the convergence. The episode return is lower than unconstrained and single-constrained settings, which is reasonable because the safety constraints are more strict.

It is remarkable that at the beginning of training, the agent violates the hazard constraint but barely hits pillars because there are more small hazards around the starting point and the pillars are nearly unreachable by random moving due to the initialization mechanism. However, with the increasing velocity, the agent is less likely to satisfy the pillar constraint. Satisfaction on the two types of constraints shows that P3O admits both feasible and infeasible initial policies and finds a constraint-satisfying solution in the end.

\paragraph{Multi-Agent Scenario.}
We also study Multi-Agent Penalized Policy Optimization    (MAP3O) in the collaborative task and take unconstrained Multi-Agent PPO (MAPPO)~\cite{chao2021surprising} as the reference. To tackle the issue of partial observability, recurrent networks are used in both algorithms.

The given task is a decentralized partially observable Markov decision process (DEC-POMDP) in which each agent has an exclusive observation and a shared global reward/cost. We replace the MAPPO loss function with Eq.~(\ref{cppo5}) and conduct a fully decentralized training and
execution. The experimental results show that MAP3O can improve the total reward while ensuring the mutual collision constraint. Inspired by \citeauthor{chao2021surprising}\shortcite{chao2021surprising}, we enhance MAP3O by learning a centralized critic for both global reward and cost that takes in the concatenation of each agent's local observation. The centralized version of MAP3O has better reward performance and satisfies the constraint more strictly. The learning curves of average episode return and cost are shown in Figure \ref{fig:mpe_result}.

\begin{figure}
      \centering
      \includegraphics[width=0.8\linewidth]{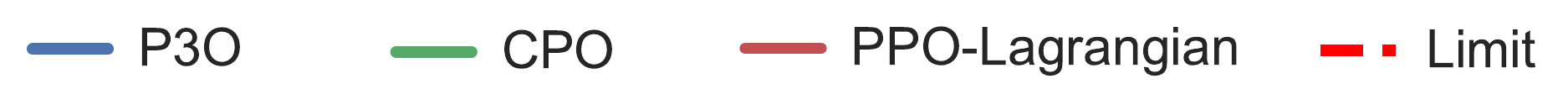}
      \subcaptionbox{Episode Return\label{fig:sg-epret}}
        {\includegraphics[width=0.45\linewidth,trim=10 15 0 0,clip]{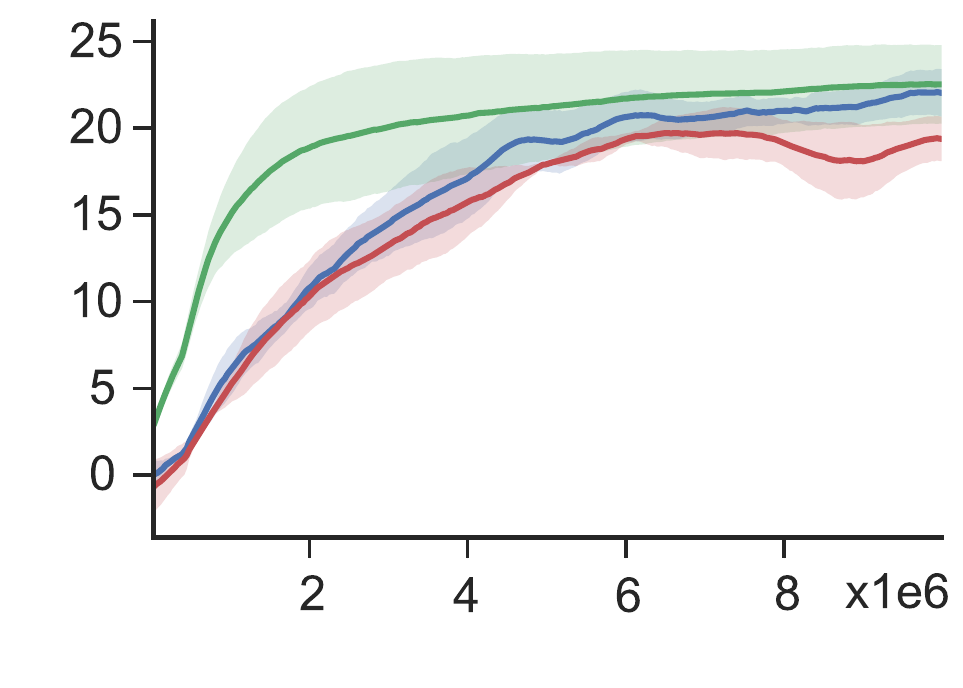}}
      \subcaptionbox{Episode Return\label{fig:sg-epcost}}
        {\includegraphics[width=0.45\linewidth,trim=10 15 0 0,clip]{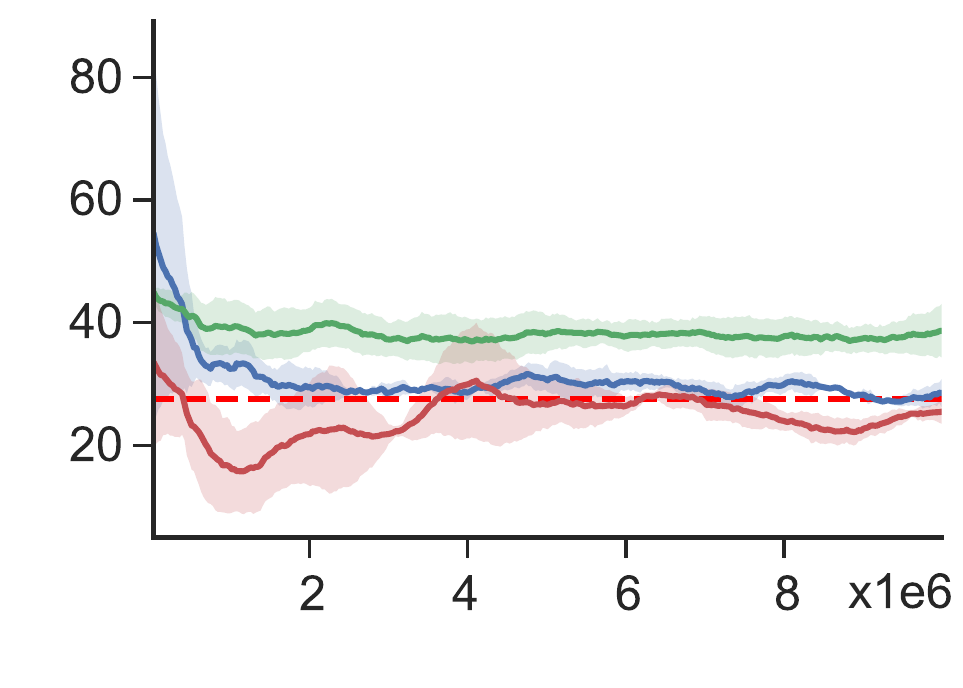}}
      \caption{Average performance for P3O, CPO and PPO-Lagrangian on Safetygym PointGoal.}
      \label{fig:safetygym_result}
\end{figure}

\section{Conclusion}
In this paper, we propose Penalized Proximal Policy Optimization (P3O), an effective and easy-to-implement algorithm for safe reinforcement learning. P3O has an unconstrained optimization objective due to the exact penalty reformulation and guarantees better performance on reward and constraint value empirically. Meanwhile, it is naturally compatible with multi-constraint and multi-agent scenarios. P3O adopts first-order minimization and avoids the inversion of high-dimensional Hessian Matrix, which is especially beneficial to large CMDPs with deep neural networks. Thus we consider the future work as end-to-end visual tasks which are less studied in safe reinforcement learning.

\section*{Acknowledgements}
This work is supported by  Science and Technology Innovation 2030 –“Brain Science and Brain-like Research” Major Project (No. 2021ZD0201402 and No. 2021ZD0201405).

\bibliographystyle{named}
\bibliography{ijcai22}

\setcounter{section}{0}
\setcounter{subsection}{0}
\section*{Supplementary Material}
To make this work self-contained and easy to read, the supplementary material is organized as follows:  In the first 3 sections, we provide the detailed proof of proposition and theorems claimed in the main paper. In Section 4, we propose the Multi-agent P3O algorithm in detail. In Section 5, we give the implementation details of experiment environments, agents, and algorithms.

\section{Proof of Proposition 1.}
The following lemma adapts Performance Difference Lemma~\cite{kakade2002approximately} to more general settings.

\begin{lemma}
For any function $f:  S \times  A \times  S  \mapsto \mathbb{R}$ and any policies $\pi$ and $\pi'$, define
$J_f(\pi) = \mathop{\mathbb{E}}_{\tau\sim \pi}\big [ \Sigma^\infty_{t=0}\gamma^tf(s_t,a_t,s_{t+1})\big ]$ and $d^\pi(s)  =  (1-\gamma) \sum^\infty_{t=0} \gamma^t P(s_t=s | \pi)$. Then the following equation holds:
\[
J_f(\pi')-J_f(\pi)=\frac{1}{1-\gamma}\mathop{\mathbb{E}}_{\substack{s\sim d^{\pi'}\\  a\sim \pi'}} [ A_f^{\pi} (s,a)]
\]
\label{performance_lemma}
\end{lemma}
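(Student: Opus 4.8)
I would prove this by starting from the right-hand side and unfolding both the discounted state distribution and the advantage function until the trajectory-level expression for $J_f(\pi')-J_f(\pi)$ emerges. The plan rests on two elementary observations: that $\frac{1}{1-\gamma}\mathbb{E}_{s\sim d^{\pi'}}[\,\cdot\,]$ is exactly the discounted sum $\sum_{t\ge 0}\gamma^t\,\mathbb{E}_{s_t\mid\pi'}[\,\cdot\,]$ along trajectories generated by $\pi'$, and that the advantage admits the one-step expansion $A_f^\pi(s,a)=\mathbb{E}_{s'\sim P(\cdot\mid s,a)}\bigl[f(s,a,s')+\gamma V_f^\pi(s')-V_f^\pi(s)\bigr]$, which follows directly from $A_f^\pi=Q_f^\pi-V_f^\pi$ and the Bellman relation for $Q_f^\pi$.

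First I would use the definition $d^{\pi'}(s)=(1-\gamma)\sum_{t=0}^\infty\gamma^t P(s_t=s\mid\pi')$ to rewrite the target quantity as
\[
\frac{1}{1-\gamma}\mathop{\mathbb{E}}_{\substack{s\sim d^{\pi'}\\a\sim\pi'}}\bigl[A_f^\pi(s,a)\bigr]
=\mathop{\mathbb{E}}_{\tau\sim\pi'}\Bigl[\sum_{t=0}^\infty\gamma^t A_f^\pi(s_t,a_t)\Bigr].
\]
Next I would substitute the one-step expansion of $A_f^\pi$ into this sum, so that each summand becomes $\gamma^t\bigl(f(s_t,a_t,s_{t+1})+\gamma V_f^\pi(s_{t+1})-V_f^\pi(s_t)\bigr)$ under the trajectory expectation (the inner expectation over $s_{t+1}\sim P(\cdot\mid s_t,a_t)$ is absorbed into the trajectory measure $\tau\sim\pi'$).

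Then I would split the resulting series into two groups. The terms $\sum_{t\ge0}\gamma^t f(s_t,a_t,s_{t+1})$ reassemble precisely into $J_f(\pi')$ by definition. The remaining terms $\sum_{t\ge0}\gamma^t\bigl(\gamma V_f^\pi(s_{t+1})-V_f^\pi(s_t)\bigr)$ telescope: shifting the index in the first part gives $\sum_{t\ge1}\gamma^t V_f^\pi(s_t)$, which cancels against all but the $t=0$ term of the second part, leaving $-V_f^\pi(s_0)$. Taking the expectation over $s_0\sim\mu$ yields $-\,\mathbb{E}_{s_0\sim\mu}[V_f^\pi(s_0)]=-J_f(\pi)$. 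Combining the two groups gives $J_f(\pi')-J_f(\pi)$, which is the claim.

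The main obstacle is bookkeeping rather than depth: I must handle the index shift in the telescoping sum carefully so that exactly the single boundary term $V_f^\pi(s_0)$ survives, and I should justify interchanging the infinite sum with the expectation (which holds under the standard assumption that $f$ is bounded so that the discounted series converges absolutely). A minor point to state cleanly is that the inner one-step expectation in the Bellman expansion of $A_f^\pi$ is consistent with the law of $s_{t+1}$ given $(s_t,a_t)$ under $\tau\sim\pi'$, so no extra change of measure is needed when collapsing the nested expectations.
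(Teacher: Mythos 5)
Your proposal is correct and is essentially the paper's own argument run in reverse: the paper starts from $J_f(\pi')$, inserts the telescoping $\pm\sum_t\gamma^t V_f^\pi(s_t)$ to produce $\mathbb{E}_{\tau\sim\pi'}[\sum_t\gamma^t A_f^\pi(s_t,a_t)] + J_f(\pi)$, and then converts the trajectory expectation into the $d^{\pi'}$ form, whereas you start from the right-hand side and unfold. The decomposition, the telescoping of the value function, the identity $\mathbb{E}_{s_0\sim\mu}[V_f^\pi(s_0)]=J_f(\pi)$, and the change between trajectory and state-distribution expectations are all the same.
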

\begin{proof}
First, $J_f(\pi')$ can be reformulated as:
\begin{equation*}
\resizebox{.97\linewidth}{!}{$
\begin{aligned}
& J_f(\pi') = \mathop{\mathbb{E}}_{\tau\sim \pi'}\big [ \sum^\infty_{t=0}\gamma^tf(s_t,a_t,s_{t+1})\big ]\\ &=\mathop{\mathbb{E}}_{\tau\sim \pi'}\big [ \sum^\infty_{t=0}\gamma^tf(s_t,a_t,s_{t+1}) -\sum^\infty_{t=0}\gamma^tV_f^\pi(s_t)  +\sum^\infty_{t=0}\gamma^tV_f^\pi(s_t) \big ]\\ &=\mathop{\mathbb{E}}_{\tau\sim \pi'}\big [ \sum^\infty_{t=0}\gamma^t\big (f(s_t,a_t,s_{t+1}) -V_f^\pi(s_t) \\
& \qquad\qquad\qquad\qquad\qquad\qquad+\gamma V_f^\pi(s_{t+1}) \big )\big ]+\mathop{\mathbb{E}}_{\tau\sim \pi'}\big [V_f^\pi(s_0)\big ]\\ & =\mathop{\mathbb{E}}_{\tau\sim \pi'}\big [ \sum^\infty_{t=0}\gamma^t A_f^\pi(s_t,a_t)\big ]+\mathop{\mathbb{E}}_{s_0\sim \mu}\big [V_f^\pi(s_0)\big ]\\ & = \mathop{\mathbb{E}}_{\tau\sim \pi'}\big [ \sum^\infty_{t=0}\gamma^t A_f^\pi(s_t,a_t)\big ]+ J_f(\pi)
\end{aligned}
$}
\end{equation*}
Next, replace $\tau\sim\pi'$ with $s\sim d^{\pi'}$ and $a\sim\pi'$:
\begin{equation*}
\resizebox{.97\linewidth}{!}{$
\begin{aligned}
& \quad \mathop{\mathbb{E}}_{\tau\sim \pi'}\big [ \sum^\infty_{t=0}\gamma^t A_f^\pi(s_t,a_t)\big ]\\ &=\mathbb{E}\big [ \sum^\infty_{t=0} \sum_s P(s_t=s)\sum_a \pi(a_t=a|s_t=s)\gamma^t A_f^\pi(s_t,a_t)\big ] \\ &=\mathbb{E}\big [ \big ( \sum_s \sum^\infty_{t=0}\gamma^tP(s_t=s)\big)\sum_a \pi(a_t=a|s_t=s)A_f^\pi(s_t,a_t)\big ] \\ &=\frac{1}{1-\gamma}\mathop{\mathbb{E}}_{\substack{s\sim d^{\pi'} \\a\sim \pi'}}\big [  A_f^\pi(s_t,a_t)\big ]
\end{aligned}
$}
\end{equation*}
Thus, the following equation holds: \[J_f(\pi')-J_f(\pi)=\frac{1}{1-\gamma}\mathop{\mathbb{E}}_{\substack{s\sim d^{\pi'}\\  a\sim \pi'}} [ A_f^{\pi} (s,a)].\] 
The proof of Lemma~\ref{performance_lemma} is completed. 
\end{proof}

For a better readability, we rewrite Proposition 1 as below.
\begin{aproposition}
The new policy $\pi_{k+1}$  obtained from the current policy $\pi_k$ via the following problem:
\begin{equation}
\begin{aligned}
&\pi_{k+1}  =  \mathop{\arg\max}_{\pi}\mathop{\mathbb{E}}_{\substack{s\sim d^{\pi} \\a\sim \pi}}[ A^{\pi_k}_R (s,a)]\\
& \mathrm{s.t.}  \quad J_{C_i}(\pi_{k})+ \frac{1}{1-\gamma}\mathop{\mathbb{E}}_{\substack{s\sim d^{\pi} \\a\sim \pi}} \big  [A_{C_i}^{\pi_ k} (s,a) \big ] \leq d_i,\ \forall i.\\
\end{aligned}
\label{appendixP1}
\end{equation}
yields a monotonic return improvement and hard constraint satisfaction.

\label{th:appendix1}
\end{aproposition}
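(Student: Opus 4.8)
The plan is to derive both conclusions directly from the Performance Difference Lemma (Lemma~\ref{performance_lemma}), which rewrites differences of discounted returns as expectations of advantage functions under the future-state distribution of the new policy. Once that lemma is in hand, both claims reduce to algebraic rearrangements of the objective and the constraints in problem~\eqref{appendixP1}, together with one elementary identity.

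That identity is $\mathbb{E}_{a\sim\pi_k}[A_f^{\pi_k}(s,a)] = 0$ for every state $s$ and every function $f$, which follows immediately from $A_f^{\pi_k} = Q_f^{\pi_k} - V_f^{\pi_k}$ and $V_f^{\pi_k}(s) = \mathbb{E}_{a\sim\pi_k}[Q_f^{\pi_k}(s,a)]$. Consequently, evaluating problem~\eqref{appendixP1} at the current policy $\pi=\pi_k$ gives objective value zero and reduces each constraint to $J_{C_i}(\pi_k)\le d_i$; so, assuming $\pi_k$ is feasible, $\pi_k$ itself lies in the feasible set, the set is nonempty, and the maximizer $\pi_{k+1}$ is well defined.

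For constraint satisfaction, I would apply Lemma~\ref{performance_lemma} with $f=C_i$, $\pi'=\pi_{k+1}$, $\pi=\pi_k$, which yields
\[
J_{C_i}(\pi_{k+1}) = J_{C_i}(\pi_k) + \frac{1}{1-\gamma}\mathbb{E}_{s\sim d^{\pi_{k+1}},\,a\sim\pi_{k+1}}[A_{C_i}^{\pi_k}(s,a)].
\]
The right-hand side is precisely the left-hand side of the $i$-th constraint in~\eqref{appendixP1} evaluated at $\pi_{k+1}$; since $\pi_{k+1}$ is feasible, that quantity is $\le d_i$, giving $J_{C_i}(\pi_{k+1})\le d_i$ for all $i$. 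For the monotonic return improvement, I would apply Lemma~\ref{performance_lemma} with $f=R$ to obtain $J_R(\pi_{k+1}) - J_R(\pi_k) = \frac{1}{1-\gamma}\mathbb{E}_{s\sim d^{\pi_{k+1}},\,a\sim\pi_{k+1}}[A_R^{\pi_k}(s,a)]$, whose right-hand side is $\frac{1}{1-\gamma}$ times the objective value of~\eqref{appendixP1} at $\pi_{k+1}$. Because $\pi_k$ is feasible with objective value zero and $\pi_{k+1}$ is the maximizer, this objective value is $\ge 0$, hence $J_R(\pi_{k+1})\ge J_R(\pi_k)$.

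The only genuine subtlety, and the step I would flag as the main obstacle, is the feasibility assumption on $\pi_k$ needed for the return-improvement half: if $\pi_k$ violated some constraint it would not be a feasible point, the comparison ``objective at $\pi_{k+1}$ $\ge$ objective at $\pi_k = 0$'' would no longer certify improvement, and one would have to fall back on an inductive argument seeded from a feasible initialization (or retain only the constraint-satisfaction guarantee). I would therefore state this standard iterative-feasibility hypothesis explicitly at the outset so that both halves of the claim go through cleanly.
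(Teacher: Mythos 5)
Your proposal is correct and follows essentially the same route as the paper: both rest on the Performance Difference Lemma applied with $f=R$ and $f=C_i$ to identify the surrogate objective and constraints of problem~\eqref{appendixP1} with the true return gap $J_R(\pi)-J_R(\pi_k)$ and the true costs $J_{C_i}(\pi)$. The paper stops at showing the problem is an exact reformulation of $\max_{\pi} J_R(\pi)$ subject to $J_{C_i}(\pi)\le d_i$ (leaving both conclusions implicit), whereas you additionally spell out the zero-advantage identity $\mathbb{E}_{a\sim\pi_k}[A_f^{\pi_k}(s,a)]=0$ and correctly flag the feasibility of $\pi_k$ as the hypothesis needed for the monotone-improvement half --- a detail the paper's proof glosses over.
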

\begin{proof}[Proof of Proposition 1]
The goal is to obtain new policy $\pi_{k+1}$ which is the solution of the following problem:
\begin{equation}
\begin{aligned}
&\pi_{k+1}  =  \mathop{\arg\max}_{\pi} J_R(\pi)\\
& \mathrm{s.t.}  \quad J_{C_i}(\pi) \leq d_i,\ \forall i.\\
\end{aligned}
\end{equation}
Replace $f$ with reward function $R$ and each cost function $C_i$ respectively in Lemma~\ref{performance_lemma}, we have:
\begin{equation}
J_R(\pi)-J_R(\pi_k)=\frac{1}{1-\gamma}\mathop{\mathbb{E}}_{\substack{s\sim d^{\pi}\\  a\sim \pi}} [ A_R^{\pi_k} (s,a)]
\end{equation}
\begin{equation}
J_{C_i}(\pi)-J_{C_i}(\pi_k)=\frac{1}{1-\gamma}\mathop{\mathbb{E}}_{\substack{s\sim d^{\pi'}\\  a\sim \pi'}} [ A_{C_i}^{\pi_k} (s,a)], \ \forall i
\end{equation}
Thus, the above problem (\ref{appendixP1}) can be reformulated as:
\begin{equation}
\begin{aligned}
&\pi_{k+1}  =  \mathop{\arg\max}_{\pi}\big( J_R(\pi_k) + \frac{1}{1-\gamma}\mathop{\mathbb{E}}_{\substack{s\sim d^{\pi} \\a\sim \pi}}[ A^{\pi_k}_R (s,a)] \big)\\
& \mathrm{s.t.}  \quad J_{C_i}(\pi_{k})+ \frac{1}{1-\gamma}\mathop{\mathbb{E}}_{\substack{s\sim d^{\pi} \\a\sim \pi}} \big  [A_{C_i}^{\pi_ k} (s,a) \big ] \leq d_i,\ \forall i.\\
\end{aligned}
\end{equation}
The proof is completed by omitting the constant term in the optimization objective.
\end{proof}


\section{Proof of Theorem 2.}
We rewrite problem (\ref{cppo2}) as:
\begin{equation}
\resizebox{.385\linewidth}{!}{$
\begin{aligned}
 &\theta_{k+1} = \mathop{\arg\min}_{\theta} \mathcal{L}_R(\theta)\\
& \mathrm{s.t.} \quad \mathcal{L}_{C_i}(\theta) \leq 0, \ \forall i\\
\end{aligned}
$}
 \tag{P}
 \label{P}
\end{equation}
and rewrite problem (\ref{cppo3}) as:
\begin{equation}
\theta_{k+1} = \mathop{\arg\min}_{\theta}  \mathcal{L}_R(\theta) + \kappa\sum_i \max\{0,\mathcal{L}_{C_i}(\theta)\}
\tag{Q}
\label{Q}
\end{equation}
Here we shorthand $\mathcal{L}_R(\theta) =  \mathop{\mathbb{E}}_{\substack{s\sim d^{\pi} \\a\sim \pi_{k}}}[-r(\theta)A^{\pi_k}_R (s,a)]$ and 
$\mathcal{L}_{C_i}=\mathop{\mathbb{E}}_{\substack{s\sim d^{\pi} \\a\sim \pi_{k}}} \big  [r(\theta)A_{C_i}^{\pi_ k} (s,a)\big ] + (1-\gamma)(J_{C_i}(\pi_{k})-d_i), \ \forall i$.
\begin{lemma}
Suppose $\bar{\theta}$ is the optimum of the constrained problem (\ref{P}). Let $\bar\lambda$ be the corresponding Lagrange multiplier vector for its dual problem. Then for $\kappa \geq ||\bar\lambda||_\infty$, $\bar{\theta}$ is also a minimizer of its penalized optimization problem (\ref{Q}).
\label{exact_penalty_function_lemma1}
\end{lemma}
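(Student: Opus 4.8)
The plan is to establish the classical $\ell_1$ exact-penalty inequality. Writing $F_\kappa(\theta) = \mathcal{L}_R(\theta) + \kappa\sum_i \max\{0,\mathcal{L}_{C_i}(\theta)\}$ for the objective of (\ref{Q}), it suffices to prove that $F_\kappa(\bar\theta) \le F_\kappa(\theta)$ for every $\theta$, which certifies $\bar\theta$ as a global minimizer of (\ref{Q}). Since $\bar\theta$ solves (\ref{P}) it is feasible, and I will first observe that this makes the penalty term vanish at $\bar\theta$, so $F_\kappa(\bar\theta) = \mathcal{L}_R(\bar\theta)$. The whole argument then reduces to showing $\mathcal{L}_R(\bar\theta) \le F_\kappa(\theta)$ for all $\theta$.

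First I would record the structure of the optimal pair $(\bar\theta,\bar\lambda)$. By hypothesis $\bar\lambda$ is the Lagrange multiplier vector for (\ref{P}), so $(\bar\theta,\bar\lambda)$ is a saddle point of the Lagrangian $L(\theta,\lambda) = \mathcal{L}_R(\theta) + \sum_i \lambda_i \mathcal{L}_{C_i}(\theta)$. This yields three ingredients: dual feasibility $\bar\lambda_i \ge 0$, complementary slackness $\bar\lambda_i \mathcal{L}_{C_i}(\bar\theta) = 0$ (so that $L(\bar\theta,\bar\lambda) = \mathcal{L}_R(\bar\theta)$), and the minimization-side saddle inequality $\mathcal{L}_R(\bar\theta) = L(\bar\theta,\bar\lambda) \le L(\theta,\bar\lambda)$ for all $\theta$.

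The second step is a purely algebraic termwise bound on the linear penalty. For each $i$, dual feasibility gives $\bar\lambda_i \mathcal{L}_{C_i}(\theta) \le \bar\lambda_i \max\{0,\mathcal{L}_{C_i}(\theta)\} \le \|\bar\lambda\|_\infty \max\{0,\mathcal{L}_{C_i}(\theta)\}$, and summing over $i$ with the hypothesis $\kappa \ge \|\bar\lambda\|_\infty$ produces $\sum_i \bar\lambda_i \mathcal{L}_{C_i}(\theta) \le \kappa \sum_i \max\{0,\mathcal{L}_{C_i}(\theta)\}$. Chaining this with the saddle inequality of the previous step gives $\mathcal{L}_R(\bar\theta) \le \mathcal{L}_R(\theta) + \sum_i \bar\lambda_i \mathcal{L}_{C_i}(\theta) \le \mathcal{L}_R(\theta) + \kappa\sum_i \max\{0,\mathcal{L}_{C_i}(\theta)\} = F_\kappa(\theta)$, which is exactly the desired claim.

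The step I expect to be the genuine obstacle is justifying the saddle inequality $\mathcal{L}_R(\bar\theta) \le L(\theta,\bar\lambda)$ for \emph{all} $\theta$, since this is equivalent to strong duality and can fail for a generic nonconvex network parametrization. I would secure it by exploiting the structure of the per-iteration subproblem: with the state distribution frozen at $s\sim d^{\pi_k}$, both $\mathcal{L}_R$ and each $\mathcal{L}_{C_i}$ are linear functionals of the policy through the importance ratio $r(\theta)$, and the constraints $\mathcal{L}_{C_i}(\theta)\le 0$ cut out a convex set, so (\ref{P}) is effectively a convex program over the policy. Invoking a Slater-type constraint qualification then guarantees existence of the saddle point that steps one and two rely on. I would therefore state this convexity-and-Slater assumption explicitly at the start, after which the remaining inequalities are elementary.
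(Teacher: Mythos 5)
Your argument is correct and is essentially the paper's own proof: both establish the chain $\mathcal{L}_R(\theta)+\kappa\sum_i\mathcal{L}^+_{C_i}(\theta)\ge \mathcal{L}_R(\theta)+\sum_i\bar\lambda_i\mathcal{L}_{C_i}(\theta)\ge \mathcal{L}_R(\bar\theta)+\sum_i\bar\lambda_i\mathcal{L}_{C_i}(\bar\theta)=\mathcal{L}_R(\bar\theta)+\kappa\sum_i\mathcal{L}^+_{C_i}(\bar\theta)$ using dual feasibility, $\kappa\ge\|\bar\lambda\|_\infty$, the Lagrangian-minimization (saddle) inequality, and complementary slackness. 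The only difference is that you explicitly flag and attempt to justify the saddle inequality via convexity and a Slater condition, whereas the paper simply assumes $\bar\theta$ is a KKT point that minimizes the Lagrangian; your added caution is warranted, since for a nonconvex neural parametrization of $\theta$ this step is an assumption rather than a consequence.
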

\begin{proof}
Since $\kappa \geq ||\bar\lambda||_\infty$, by taking $[\cdot]^+$ as shorthand for $\max\{0,\cdot\}, $it follows that:
\begin{align}\label{th1:1}
    \mathcal{L}_R(\theta) + \kappa\sum_i \mathcal{L}^+_{C_i}(\theta) 
    &\geq \mathcal{L}_R(\theta) + \sum_i \bar{\lambda_i}\mathcal{L}^+_{C_i}(\theta)\\ 
    &\geq \mathcal{L}_R(\theta) + \sum_i \bar{\lambda_i}\mathcal{L}_{C_i}(\theta)\label{th1:2}
\end{align}

By assumption, $\bar\theta$ is a Karush-Kuhn-Tucker point in the constrained problem (\ref{P}), at which KKT conditions are satisfied with the Lagrange multiplier vector  $\bar{\lambda}$. It gives:
\begin{align}\label{th1:3}
    \mathcal{L}_R(\theta) + \sum_i \bar{\lambda_i}\mathcal{L}_{C_i}(\theta)
    &\geq \mathcal{L}_R(\bar\theta) + \sum_i \bar{\lambda_i}\mathcal{L}_{C_i}(\bar\theta)\\ \label{th1:4}
    &=  \mathcal{L}_R(\bar\theta) + \sum_i \bar{\lambda_i}\mathcal{L}^+_{C_i}(\bar\theta)\\\label{th1:5}
     &= \mathcal{L}_R(\bar\theta) + \kappa \sum_i \mathcal{L}^+_{C_i}(\bar\theta)
\end{align}
(\ref{th1:3}) holds because $\bar\theta$ minimizes the Lagrange function. (\ref{th1:4}) is derived from the complementary slackness.

Then, by (\ref{th1:1})-(\ref{th1:5}), we conclude that $\mathcal{L}(\theta) \geq \mathcal{L}(\bar\theta)$ holds for all $\theta \in \Theta$, which means $\bar\theta$ is a minimizer of  the penalized optimization problem (\ref{Q}). 

The proof of Lemma~\ref{exact_penalty_function_lemma1} is completed.
\end{proof}

\begin{lemma}
Let $\widetilde{\theta}$ be a minimizer of the penalized optimization problem (\ref{Q}). $\bar\theta$ and $\bar\lambda$ are the same as they are defined in Theorem~\ref{exact_penalty_function_lemma1}. Then for $\kappa \geq ||\bar\lambda||_\infty$, $\widetilde{\theta}$ is also optimal in the constrained problem (\ref{P}).
\label{exact_penalty_function_lemma2}
\end{lemma}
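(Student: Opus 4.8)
The plan is to turn the two-sided optimality claim into a single feasibility question. First I would identify the optimal value of the penalized problem~(\ref{Q}). By Lemma~\ref{exact_penalty_function_lemma1} the point $\bar\theta$ already minimizes~(\ref{Q}), and because $\bar\theta$ is feasible for~(\ref{P}) we have $\mathcal{L}^+_{C_i}(\bar\theta)=0$ for every $i$, so the minimum value of~(\ref{Q}) equals $p^\ast:=\mathcal{L}_R(\bar\theta)$, the optimal value of~(\ref{P}). Hence any minimizer $\widetilde\theta$ of~(\ref{Q}) satisfies
\[
\mathcal{L}_R(\widetilde\theta)+\kappa\sum_i\mathcal{L}^+_{C_i}(\widetilde\theta)=p^\ast.
\]
The lemma then reduces to showing that $\widetilde\theta$ is feasible: once $\mathcal{L}^+_{C_i}(\widetilde\theta)=0$ for all $i$, the penalty term drops out, forcing $\mathcal{L}_R(\widetilde\theta)=p^\ast=\mathcal{L}_R(\bar\theta)$, and a feasible point attaining the optimal value is optimal for~(\ref{P}).

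For feasibility I would reuse the Lagrangian lower bound already implicit in the proof of Lemma~\ref{exact_penalty_function_lemma1}: since $\bar\theta$ minimizes the Lagrangian and complementary slackness gives $\sum_i\bar\lambda_i\mathcal{L}_{C_i}(\bar\theta)=0$, one obtains
\[
\mathcal{L}_R(\theta)+\sum_i\bar\lambda_i\mathcal{L}_{C_i}(\theta)\ \ge\ p^\ast \qquad \text{for all }\theta.
\]
Evaluating this at $\widetilde\theta$ and inserting the elementary bounds $\mathcal{L}_{C_i}(\widetilde\theta)\le\mathcal{L}^+_{C_i}(\widetilde\theta)$ (as $\bar\lambda_i\ge0$) and $\bar\lambda_i\mathcal{L}^+_{C_i}(\widetilde\theta)\le\kappa\,\mathcal{L}^+_{C_i}(\widetilde\theta)$ (as $0\le\bar\lambda_i\le||\bar\lambda||_\infty\le\kappa$ and $\mathcal{L}^+_{C_i}\ge0$) yields the chain
\[
\begin{aligned}
p^\ast &= \mathcal{L}_R(\widetilde\theta)+\kappa\textstyle\sum_i\mathcal{L}^+_{C_i}(\widetilde\theta)
\ \ge\ \mathcal{L}_R(\widetilde\theta)+\textstyle\sum_i\bar\lambda_i\mathcal{L}^+_{C_i}(\widetilde\theta)\\
&\ge\ \mathcal{L}_R(\widetilde\theta)+\textstyle\sum_i\bar\lambda_i\mathcal{L}_{C_i}(\widetilde\theta)
\ \ge\ p^\ast.
\end{aligned}
\]
Every inequality must therefore be an equality, and the first one gives $\sum_i(\kappa-\bar\lambda_i)\mathcal{L}^+_{C_i}(\widetilde\theta)=0$ with all summands nonnegative, so $(\kappa-\bar\lambda_i)\mathcal{L}^+_{C_i}(\widetilde\theta)=0$ for each $i$.

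This last identity is exactly where the difficulty concentrates. If $\kappa>||\bar\lambda||_\infty$ strictly, then $\kappa-\bar\lambda_i>0$ for all $i$ forces $\mathcal{L}^+_{C_i}(\widetilde\theta)=0$, i.e.\ $\widetilde\theta$ is feasible, and the reduction of the first paragraph completes the proof. The delicate point, which I expect to be the real obstacle, is the boundary case $\kappa=||\bar\lambda||_\infty$ permitted by the hypothesis: when $\kappa=\bar\lambda_j$ for a maximal multiplier the factor $\kappa-\bar\lambda_j$ vanishes and the identity no longer pins down $\mathcal{L}^+_{C_j}(\widetilde\theta)$, so the penalized problem can acquire spurious minimizers that stay infeasible for~(\ref{P}) (already visible for $\mathcal{L}_R(\theta)=\theta$ with constraint $-\theta\le0$, where $\kappa=\bar\lambda=1$ makes every $\theta\le0$ a minimizer of~(\ref{Q})). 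I would therefore carry the argument through under the strict bound $\kappa>||\bar\lambda||_\infty$, where it is airtight, and flag that the equality case genuinely requires this strengthening rather than the stated $\ge$; the remaining algebra is routine.
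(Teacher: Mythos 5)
Your proof takes essentially the same route as the paper's: the engine in both is the chain $\mathcal{L}_R(\bar\theta)+\kappa\sum_i\mathcal{L}^+_{C_i}(\bar\theta)=\mathcal{L}_R(\bar\theta)+\sum_i\bar\lambda_i\mathcal{L}_{C_i}(\bar\theta)\le\mathcal{L}_R(\widetilde\theta)+\sum_i\bar\lambda_i\mathcal{L}_{C_i}(\widetilde\theta)\le\mathcal{L}_R(\widetilde\theta)+\sum_i\bar\lambda_i\mathcal{L}^+_{C_i}(\widetilde\theta)\le\mathcal{L}_R(\widetilde\theta)+\kappa\sum_i\mathcal{L}^+_{C_i}(\widetilde\theta)$, assembled from minimality of $\bar\theta$ for the Lagrangian, complementary slackness, $\mathcal{L}_{C_i}\le\mathcal{L}^+_{C_i}$ with $\bar\lambda_i\ge0$, and $\bar\lambda_i\le\kappa$. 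The difference is organizational: the paper splits into feasible/infeasible cases and asserts that this chain contradicts the minimality of $\widetilde\theta$ in the infeasible case, whereas you first pin the optimal value of (\ref{Q}) at $p^\ast=\mathcal{L}_R(\bar\theta)$ via Lemma~\ref{exact_penalty_function_lemma1}, force every link to be an equality, and read off $(\kappa-\bar\lambda_i)\mathcal{L}^+_{C_i}(\widetilde\theta)=0$ for each $i$. Your packaging is the more careful one, and the defect you flag is genuine: the paper's chain consists entirely of non-strict inequalities, so in the infeasible case it only yields $\mathcal{L}(\bar\theta)\le\mathcal{L}(\widetilde\theta)$, which is perfectly consistent with $\widetilde\theta$ being a minimizer and is not a contradiction. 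A contradiction requires $\kappa>\bar\lambda_j$ for every $j$ with $\mathcal{L}^+_{C_j}(\widetilde\theta)>0$, and your one-dimensional example ($\mathcal{L}_R(\theta)=\theta$, constraint $-\theta\le0$, $\bar\lambda=1$, $\kappa=1$, every $\theta\le0$ minimizing the penalized objective while only $\theta=0$ is feasible) shows the lemma as stated is actually false at the boundary $\kappa=\|\bar\lambda\|_\infty$. The statement and the paper's proof both need the strict inequality $\kappa>\|\bar\lambda\|_\infty$ (or an explicit restriction to feasible minimizers of (\ref{Q})); with that strengthening your argument is complete and correct.
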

\begin{proof}
If $\widetilde\theta$ is in the set of feasible solutions $D=\{\theta | \mathcal{L}_{C_i}(\theta) \leq 0, \ \forall i\}$, it gives:
\begin{align}
\mathcal{L}_R(\widetilde\theta) &= \mathcal{L}_R(\widetilde\theta) + \kappa\sum_i \mathcal{L}^+_{C_i}(\widetilde\theta)\\
&\leq \mathcal{L}_R(\theta) + \kappa\sum_i \mathcal{L}^+_{C_i}(\theta) =\mathcal{L}_R(\theta)
\end{align}
The inequality above indicates  $\widetilde{\theta}$ is also optimal in the constrained problem (\ref{P}). 

If  $\widetilde\theta$ is not feasible, we have:
\begin{align}
 &\mathcal{L}_R(\bar\theta) + \kappa \sum_i \mathcal{L}^+_{C_i}(\bar\theta)
    =  \mathcal{L}_R(\bar\theta) + \sum_i \bar{\lambda_i}\mathcal{L}^+_{C_i}(\bar\theta) \\
    & = \mathcal{L}_R(\bar\theta) + \sum_i \bar{\lambda_i}\mathcal{L}_{C_i}(\bar\theta)
    \leq \mathcal{L}_R(\widetilde\theta) + \sum_i \bar{\lambda_i}\mathcal{L}_{C_i}(\widetilde\theta) \\
   & \leq \mathcal{L}_R(\widetilde\theta) + \sum_i \bar{\lambda_i}\mathcal{L}^+_{C_i}(\widetilde\theta)
   \leq \mathcal{L}_R(\widetilde\theta) + \kappa\sum_i \mathcal{L}^+_{C_i}(\widetilde\theta) 
\end{align}which is a contradiction to the assumption that  $\widetilde{\theta}$ is a minimizer of the penalized optimization problem (\ref{Q}).Thus, $\widetilde\theta$ can only be the feasible optimal solution for problem (\ref{P}).

The proof of Lemma~\ref{exact_penalty_function_lemma2} is completed.
\end{proof}

For a better readability, we rewrite Theorem 2 as below.
\begin{atheorem}
Suppose $\bar\lambda$ is the corresponding Lagrange multiplier vector for the optimum of problem (\ref{P}). Let the penalty factor $\kappa$ be a sufficiently large constant ($\kappa \geq ||\bar\lambda||_\infty$), the two problems (\ref{P}) and  (\ref{Q}) share the same optimal solution set.
\end{atheorem}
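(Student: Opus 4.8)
The plan is to obtain the theorem as an immediate consequence of the two preceding lemmas, which between them supply both directions of the required set identity. Write $S_P$ for the set of optimal solutions of the constrained problem~\eqref{P} and $S_Q$ for the set of minimizers of the penalized problem~\eqref{Q}. The whole claim reduces to verifying $S_P = S_Q$ under $\kappa \geq \|\bar\lambda\|_\infty$, and I would prove this by establishing the two inclusions $S_P \subseteq S_Q$ and $S_Q \subseteq S_P$ in turn.

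For the reverse inclusion I would simply invoke Lemma~\ref{exact_penalty_function_lemma2}: any minimizer $\tilde\theta$ of~\eqref{Q} is shown there to be feasible and optimal for~\eqref{P}, so $\tilde\theta \in S_P$ and hence $S_Q \subseteq S_P$. For the forward inclusion, rather than re-deriving a KKT multiplier for each element of $S_P$, I would exploit that every $\theta \in S_P$ is feasible, so its penalty term $\kappa\sum_i \max\{0,\mathcal{L}_{C_i}(\theta)\}$ vanishes and its value of the~\eqref{Q}-objective equals the common optimal value $\mathcal{L}_R(\bar\theta)$ shared by all optima of~\eqref{P}. Lemma~\ref{exact_penalty_function_lemma1} shows that the particular optimum $\bar\theta$ is a global minimizer of~\eqref{Q}, which identifies the minimum of~\eqref{Q} as exactly $\mathcal{L}_R(\bar\theta)$. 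Consequently every $\theta \in S_P$ attains this minimum, giving $S_P \subseteq S_Q$. Combining the two inclusions yields $S_P = S_Q$.

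Since the two lemmas carry the analytical weight, the theorem itself is essentially a packaging step; the subtlety I would take care to check is that a finite multiplier $\bar\lambda$ actually exists, so that the exactness threshold $\|\bar\lambda\|_\infty$ is meaningful and not vacuous. The hard part is therefore not the set-theoretic argument but the implicit regularity used inside Lemma~\ref{exact_penalty_function_lemma1}, namely that problem~\eqref{P} admits a KKT pair with a bounded multiplier (guaranteed, e.g., under strong duality via a Slater-type constraint qualification). I would confirm that this hypothesis holds for the surrogate~\eqref{P}, since without it the penalty could fail to be exact for any finite $\kappa$.
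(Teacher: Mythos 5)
Your proposal is correct and follows essentially the same route as the paper, which likewise obtains the theorem by combining Lemma~\ref{exact_penalty_function_lemma1} and Lemma~\ref{exact_penalty_function_lemma2}. You additionally fill in a detail the paper leaves implicit (that every optimum of~\eqref{P}, not just the particular KKT point $\bar\theta$, attains the minimum of~\eqref{Q} because its penalty term vanishes) and rightly flag the implicit constraint-qualification assumption needed for $\bar\lambda$ to exist.
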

\begin{proof}[Proof of Theorem 2]
The proof is completed with Lemma~\ref{exact_penalty_function_lemma1} and Lemma~\ref{exact_penalty_function_lemma2}.
\end{proof}

\section{Proof of Theorem 3.}
We rewrite problem (\ref{cppo3}) as:
\begin{equation}
\mathcal{L}^\pi(\theta)  = \mathcal{L}^\pi_R(\theta) +\kappa\sum_i \max\{0,\mathcal{L}^\pi_{C_i}(\theta)\}
\tag{M}
\label{M}
\end{equation}
Here we shorthand $\mathcal{L}^\pi_R(\theta) =  \mathop{\mathbb{E}}_{\substack{s\sim d^{\pi} \\a\sim \pi_{k}}}[-r(\theta)A^{\pi_k}_R (s,a)]$ and 
$\mathcal{L}^\pi_{C_i}(\theta)=\mathop{\mathbb{E}}_{\substack{s\sim d^{\pi} \\a\sim \pi_{k}}} \big  [r(\theta)A_{C_i}^{\pi_ k} (s,a)\big ] + (1-\gamma)(J_{C_i}(\pi_{k})-d_i), \ \forall i$.

We rewrite problem (\ref{cppo4}) as:
\begin{equation}
\mathcal{L}^\pi_k(\theta)  = \mathcal{L}^{\pi_k}_R(\theta) +\kappa\sum_i \max\{0,\mathcal{L}^{\pi-K}_{C_i}(\theta)\}
\tag{N}
\label{N}
\end{equation}
Here we shorthand $\mathcal{L}^{\pi_k}_R(\theta) =  \mathop{\mathbb{E}}_{\substack{s\sim d^{\pi_k} \\a\sim \pi_{k}}}[-r(\theta)A^{\pi_k}_R (s,a)]$ and 
$\mathcal{L}^{\pi_k}_{C_i}(\theta)=\mathop{\mathbb{E}}_{\substack{s\sim d^{\pi_k} \\a\sim \pi_{k}}} \big  [r(\theta)A_{C_i}^{\pi_ k} (s,a)\big ] + (1-\gamma)(J_{C_i}(\pi_{k})-d_i), \ \forall i$.

\begin{lemma}For any function $f:  S \times  A \times  S  \mapsto \mathbb{R}$ and any policies $\pi$ and $\pi'$, define
\begin{equation*}
\begin{aligned}
\delta& = \mathop{\mathbb{E}}_{s\sim d^{\pi}}\big[\mathrm{D}_{KL}(\pi' || \pi)[s]\big]\\
\varepsilon^{\pi'}_f &= \mathop{\max}_{s} |\mathop{\mathbb{E}}_{a\sim \pi'} A_f^\pi(s,a)| \\
D^{\pm}_{\pi,f}(\pi') &= \frac{1}{1-\gamma}\mathop{\mathbb{E}}_{\substack{s\sim d^{\pi}\\  a\sim \pi}} \bigg[\frac{\pi'(s,a)}{\pi(s,a)} A^{\pi}_f (s,a) \pm \frac{\gamma\varepsilon^{\pi'}_f\sqrt{2\delta}}{1-\gamma} \bigg]
\end{aligned}
\end{equation*}
The following bounds hold:
\[D^{+}_{\pi,f}(\pi') \geq J_f(\pi') - J_f(\pi) \geq D^{-}_{\pi,f}(\pi')\]
\label{approxiamte_error1}
\end{lemma}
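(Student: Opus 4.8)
The plan is to reduce the statement to the Performance Difference Lemma (Lemma~\ref{performance_lemma}) and then spend the real effort controlling the mismatch between the two state distributions $d^{\pi'}$ and $d^{\pi}$. First I would apply Lemma~\ref{performance_lemma} with the given $f$ to obtain
\[
J_f(\pi') - J_f(\pi) = \frac{1}{1-\gamma}\mathop{\mathbb{E}}_{\substack{s\sim d^{\pi'}\\a\sim\pi'}}[A_f^\pi(s,a)].
\]
Writing $g(s) = \mathop{\mathbb{E}}_{a\sim\pi'}[A_f^\pi(s,a)]$, a change of measure in the action turns this into $g(s) = \mathop{\mathbb{E}}_{a\sim\pi}\bigl[\tfrac{\pi'(a|s)}{\pi(a|s)}A_f^\pi(s,a)\bigr]$, which is exactly the integrand appearing inside $D^{\pm}_{\pi,f}(\pi')$ once the outer average is taken against $d^{\pi}$ instead of $d^{\pi'}$. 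Since $\varepsilon_f^{\pi'} = \max_s|g(s)| = \|g\|_\infty$ by definition, the whole lemma collapses to the single estimate
\[
\Bigl|\mathop{\mathbb{E}}_{s\sim d^{\pi'}}[g(s)] - \mathop{\mathbb{E}}_{s\sim d^{\pi}}[g(s)]\Bigr| \le \frac{\gamma\,\varepsilon_f^{\pi'}\sqrt{2\delta}}{1-\gamma},
\]
because dividing the displacement by $1-\gamma$ produces precisely the $\pm$ correction term in $D^{\pm}_{\pi,f}$.

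To prove that estimate I would bound the left-hand side by $\|g\|_\infty\,\|d^{\pi'}-d^\pi\|_1 \le \varepsilon_f^{\pi'}\,\|d^{\pi'}-d^\pi\|_1$ and then control the state-distribution shift. Using $d^\pi = (1-\gamma)\mu^{\top}(I-\gamma P_\pi)^{-1}$, where $P_\pi$ is the state-to-state transition operator induced by $\pi$, the resolvent identity gives $d^{\pi'}-d^\pi = \gamma\,d^\pi(P_{\pi'}-P_\pi)(I-\gamma P_{\pi'})^{-1}$. Bounding $\|(I-\gamma P_{\pi'})^{-1}\|_{1\to1}\le \tfrac{1}{1-\gamma}$ by the Neumann series and noting that $\sum_{s'}|(P_{\pi'})_{s,s'}-(P_\pi)_{s,s'}| \le \sum_a|\pi'(a|s)-\pi(a|s)| = 2\,\mathrm{D}_{TV}(\pi'\|\pi)[s]$ yields the CPO-style inequality
\[
\|d^{\pi'}-d^\pi\|_1 \le \frac{2\gamma}{1-\gamma}\mathop{\mathbb{E}}_{s\sim d^\pi}\bigl[\mathrm{D}_{TV}(\pi'\|\pi)[s]\bigr].
\]
I would then apply Pinsker's inequality $\mathrm{D}_{TV}(\pi'\|\pi)[s]\le\sqrt{\tfrac12\mathrm{D}_{KL}(\pi'\|\pi)[s]}$ followed by Jensen's inequality to pull the concave square root outside the expectation, giving $\mathop{\mathbb{E}}_{s\sim d^\pi}[\mathrm{D}_{TV}] \le \sqrt{\delta/2}$. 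Chaining these three bounds reproduces $\tfrac{\gamma\varepsilon_f^{\pi'}\sqrt{2\delta}}{1-\gamma}$ exactly, and since $|y-x|\le b$ forces $x-b\le y\le x+b$, the two-sided conclusion $D^{+}_{\pi,f}(\pi') \ge J_f(\pi')-J_f(\pi) \ge D^{-}_{\pi,f}(\pi')$ follows immediately.

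The main obstacle is the state-distribution shift bound, since the other steps (one importance-sampling identity, Pinsker, Jensen) are essentially mechanical, whereas this one genuinely compares the fixed points of two distinct discounted transition operators. The cleanest route is the resolvent-identity computation above, whose only delicate part is the bookkeeping of the $\gamma$ and $1-\gamma$ factors and the recognition that the difference of transition operators, averaged against $d^\pi$, contributes exactly twice the expected total-variation policy gap; once that is done the factor $\sqrt{2\delta}=\sqrt{2}\sqrt{\delta}$ emerges naturally from combining the constant $2\gamma/(1-\gamma)$ with the $\sqrt{\delta/2}$ coming out of Pinsker and Jensen.
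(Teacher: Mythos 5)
Your proof is correct: the reduction via the Performance Difference Lemma and the importance-sampling rewrite of $g(s)$, the resolvent identity for $d^{\pi'}-d^{\pi}$ with the Neumann-series bound $\|(I-\gamma P_{\pi'})^{-1}\|_{1\to 1}\le \tfrac{1}{1-\gamma}$, and the Pinsker--Jensen step all check out, and the constants chain to exactly $\tfrac{\gamma\varepsilon_f^{\pi'}\sqrt{2\delta}}{1-\gamma}$. The paper gives no proof of its own here (it simply cites Achiam et al.\ 2017), and your argument is essentially a self-contained reproduction of that cited CPO bound, so it matches the intended route.
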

\begin{proof}
See \cite{achiam2017constrained} for details.
\end{proof}

For a better readability, we rewrite Theorem 3 as below.
\begin{atheorem}
For any policies $\pi$, define \[\delta = \mathop{\mathbb{E}}_{s\sim d^{\pi_k}}\big[\mathrm{D}_{KL}(\pi || \pi_k)[s]\big].\]
The worst-case approximate error is $\mathcal{O}(\kappa m\sqrt{\delta})$  if we replace problem (\ref{M}) with problem (\ref{N}).
\label{th:appendix3}
\end{atheorem}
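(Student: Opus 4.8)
The plan is to bound the pointwise gap $|\mathcal{L}^\pi(\theta) - \mathcal{L}^\pi_k(\theta)|$ between the objectives of problems~(\ref{M}) and~(\ref{N}), which is precisely the error incurred by replacing the inaccessible state distribution $d^\pi$ with $d^{\pi_k}$. First I would decompose this gap into a reward contribution and a penalized cost contribution:
\[
\mathcal{L}^\pi(\theta) - \mathcal{L}^\pi_k(\theta) = \big(\mathcal{L}^\pi_R(\theta) - \mathcal{L}^{\pi_k}_R(\theta)\big) + \kappa\sum_i\big(\max\{0,\mathcal{L}^\pi_{C_i}(\theta)\} - \max\{0,\mathcal{L}^{\pi_k}_{C_i}(\theta)\}\big).
\]
The crucial preliminary observation is that the constant offset $(1-\gamma)(J_{C_i}(\pi_k)-d_i)$ occurs identically in $\mathcal{L}^\pi_{C_i}$ and $\mathcal{L}^{\pi_k}_{C_i}$, hence cancels; every surrogate difference collapses to a pure change-of-state-distribution term $\mathbb{E}_{s\sim d^\pi,a\sim\pi}[A^{\pi_k}_f] - \mathbb{E}_{s\sim d^{\pi_k},a\sim\pi}[A^{\pi_k}_f]$ with $f\in\{R,C_1,\dots,C_m\}$.

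The second step is to control each such term with Lemma~\ref{approxiamte_error1}. Taking the reference policy to be $\pi_k$ and the comparison policy to be $\pi$, the two-sided bound $D^{+}_{\pi_k,f}(\pi)\ge J_f(\pi)-J_f(\pi_k)\ge D^{-}_{\pi_k,f}(\pi)$, combined with the performance-difference identity of Lemma~\ref{performance_lemma} (which equates $J_f(\pi)-J_f(\pi_k)$ with the $d^\pi$-average and identifies the surrogate as its $d^{\pi_k}$-average), pins the two expectations within the half-width of the $D^{\pm}$ interval. Concretely this yields $|\mathcal{L}^\pi_f(\theta)-\mathcal{L}^{\pi_k}_f(\theta)| \le \frac{\gamma\varepsilon^\pi_f\sqrt{2\delta}}{1-\gamma}$ for each $f$, an $\mathcal{O}(\sqrt{\delta})$ bound, where $\varepsilon^\pi_f=\max_s|\mathbb{E}_{a\sim\pi}A^{\pi_k}_f(s,a)|$.

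The third step handles the penalty. Since $x\mapsto\max\{0,x\}$ is $1$-Lipschitz, each penalized difference is dominated by its surrogate difference, $|\max\{0,\mathcal{L}^\pi_{C_i}\}-\max\{0,\mathcal{L}^{\pi_k}_{C_i}\}|\le|\mathcal{L}^\pi_{C_i}-\mathcal{L}^{\pi_k}_{C_i}|$, so the ReLU does not amplify the error beyond the explicit factor $\kappa$. Summing over the $m$ constraints plus the single reward term, and writing $\varepsilon=\max\{\varepsilon^\pi_R,\max_i\varepsilon^\pi_{C_i}\}$, gives $|\mathcal{L}^\pi(\theta)-\mathcal{L}^\pi_k(\theta)|\le\frac{\gamma\varepsilon\sqrt{2\delta}}{1-\gamma}(1+\kappa m)$, which is $\mathcal{O}(\kappa m\sqrt{\delta})$ once $\gamma$ and $\varepsilon$ are treated as problem constants.

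The main obstacle I anticipate is the middle step: invoking Lemma~\ref{approxiamte_error1} so that the change of state measure from $d^{\pi_k}$ to $d^\pi$ becomes a $\sqrt{\delta}$-scale quantity rather than something uncontrolled. The cleanest route underlying the lemma is the estimate $\|d^\pi-d^{\pi_k}\|_1\le\frac{\gamma}{1-\gamma}\sqrt{2\delta}$, obtained from Pinsker's inequality and Jensen's inequality applied to $\delta=\mathbb{E}_{s\sim d^{\pi_k}}[\mathrm{D}_{KL}(\pi\|\pi_k)[s]]$; pairing this $\ell_1$ bound with $\|\mathbb{E}_{a\sim\pi}A^{\pi_k}_f\|_\infty=\varepsilon^\pi_f$ reproduces the per-term estimate directly. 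The linearity in $\kappa$ and $m$ is then immediate from the triangle inequality, so the only delicate bookkeeping is carrying the $(1-\gamma)$ factors and the advantage-magnitude constants $\varepsilon^\pi_f$ through consistently.
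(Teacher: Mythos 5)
Your proposal is correct and follows essentially the same route as the paper's proof: a triangle-inequality decomposition into the reward term plus the $m$ penalized cost terms, the $1$-Lipschitzness of $\max\{0,\cdot\}$ to strip off the ReLU, and the CPO surrogate bound (Lemma~\ref{approxiamte_error1} combined with Lemma~\ref{performance_lemma}) to control each distribution-shift term by $\gamma\varepsilon\sqrt{2\delta}/(1-\gamma)$, yielding the same $(1+\kappa m)$ factor. Your added remarks on the cancellation of the constant offsets and the Pinsker-based mechanism behind Lemma~\ref{approxiamte_error1} are sound elaborations of steps the paper leaves implicit, not a different argument.
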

\begin{proof}[Proof of Theorem 3]
According to Lemma~\ref{performance_lemma} and Lemma~\ref{approxiamte_error1}, we have:
\small
\begin{align}
&\quad\ \ |\mathcal{L}^\pi(\theta) - \mathcal{L}^\pi_k(\theta)|\\
&\leq |\mathcal{L}^\pi_R(\theta)- \mathcal{L}^{\pi_k}_R(\theta)|  +\kappa\sum_i |[\mathcal{L}^\pi_{C_i}(\theta)]^+ - [\mathcal{L}^{\pi_k}_{C_i}(\theta)]^+|\\
&\leq |\mathcal{L}^\pi_R(\theta)- \mathcal{L}^{\pi_k}_R(\theta)|  +\kappa\sum_i |\mathcal{L}^\pi_{C_i}(\theta) - \mathcal{L}^{\pi_k}_{C_i}(\theta)|\\
&\leq \frac{\gamma\varepsilon^{\pi'}_R\sqrt{2\delta}}{1-\gamma} +\kappa\sum_i \frac{\gamma\varepsilon^{\pi'}_{C_i}\sqrt{2\delta}}{1-\gamma} \\
&\leq  (1+\kappa m) \frac{\gamma||\varepsilon^{\pi'}||_{\infty}\sqrt{2\delta}}{1-\gamma}.
\end{align}
Then, the proof is completed.
\end{proof}

\section{Multi-Agent P3O}
In this section, we extend P3O to the decentralized multi-agent refinement learning (MARL) scenarios and denote it as MAP3O, inspired by MAPPO~\cite{chao2021surprising}.

A decentralized partially observed constrained Markov decision process (DEC-POCMDP) is defined by a 7-tuple $(S,\{A_i\},T,\{\Omega_i\},O,R,C)$,
where $S$ is the state space, $A_i$ is the action space for agent i, $A = \times_iA_i$ is the joint action space, $\Omega_i$ is the observation space for agent i, $\Omega = \times_i\Omega_i$ is the joint observation space, $T(s,a,s') = P(s'|s,a)$ and $O(s,a,o) = P(o|s',a)$ are transition probabilities, $R$ is the global reward function and $C$ is a set of global constraint functions.
$J_R(\pi) = \mathop{\mathbb{E}}_{\tau\sim \pi}\big [ \sum^\infty_{t=0}\gamma^t R(s_t,\boldsymbol{a_t},s_{t+1})\big ]$
and $J_{C}(\pi) = \mathop{\mathbb{E}}_{\tau\sim \pi}\big [ \Sigma^\infty_{t=0}\gamma^t C(s_t,\boldsymbol{a_t},s_{t+1})\big ]$.
The goal of Multi-agent safe RL is to optimize the joint policy, i.e., $\pi^* = \mathop{\arg\max}_{\pi \in \Pi_C} J_R(\pi)$ where $\Pi_C = \{ \pi \in \Pi \ | J_{C}(\pi) \leq d\}$.

The detailed iterates scheme of fully decentralized MAP3O for agent $i$ is summarized in Algorithm \ref{MAP30-alg1}.

\begin{algorithm}[H]
\caption{MAP3O (dec): Decentralized Multi-Agent Penalized Proximal Policy Optimization} 
\label{MAP30-alg1}
\small
\textbf{Input}: initial policy $\pi(\theta^i_0)$, value function $V_R(\phi^i_0)$ and cost-value function $V_C(\psi^i_0)$.
\begin{algorithmic}[1] 
\FOR{$ k\mathrm{\  in\ } 0,1,2,...K$ }
            \STATE Empty data buffer $\mathcal{D}_k$.
            \WHILE{$\mathcal{D}_k$ is not full}
            \STATE Reset a new trajectory $\tau$.
            \STATE Initialize actor RNN state $h^i_{0,\pi}$.
            \STATE Initialize critic RNN states $h^i_{0,V_R}$ and $h^i_{0,V_C}$.
            \FOR{$ t\mathrm{\  in\ } 1,2,...T$ }
            \STATE $p^i_t, h^i_{t,\pi} = \pi(o^i_t,h^i_{t-1,\pi}; \theta^i_{t-1})$.
            \STATE Sample $a^i_t$ over $p^i_t$.
            \STATE $v^i_{t,R},h^i_{t,V_R} = V_R(o^i_t,h^i_{t-1,V_R}; \phi^i_{t-1})$.
            \STATE $v^i_{t,C},h^i_{t,V_C} = V_C(o^i_t,h^i_{t-1,V_C}; \psi^i_{t-1})$.
            
            \STATE Execute $a^i_t$ and record $r_t, c_t, o^i_{t+1}$.
            \STATE $\tau += [o^i_t,a^i_t,h^i_{t,\pi},v^i_{t,R},h^i_{t,V_R},v^i_{t,C},h^i_{t,V_C},r_t,c_t,o^i_{t+1}]$.
            \ENDFOR
        	\STATE Compute every $\hat{R}_t = \sum^{T-t}_{k=0} \gamma^k r_{t+k}$ and $\hat{A}^i_R(o^i_t,a^i_t)$.
        	\STATE Compute every  $\hat{C}_t = \sum^{T-t}_{k=0} \gamma^k c_{t+k}$ and $\hat{A}^i_C(o^i_t,a^i_t)$.
        	\STATE Add data into $\mathcal{D}_k$.
        	\ENDWHILE
        	\FOR{$ n \mathrm{\  in\ } 0,1,2,...N$ }
        	\STATE Update the RNN hidden states in data buffer $\mathcal{D}_k$.
            \STATE Compute $\mathcal{L}^{\mathrm{CLIP}}_{R}(\theta^i_k)$ in Eq.~(\ref{L_R_CLIP}).
            \STATE Compute $\mathcal{L}^{\mathrm{CLIP}}_{C}(\theta^i_k)$ in Eq.~(\ref{L_C_CLIP}).
            \STATE $\theta^i_k \leftarrow \theta^i_k  - \eta \nabla  \mathcal{L}^{\mathrm{P3O}}(\theta^i_k)$ in Eq.~(\ref{cppo5}).
            \ENDFOR
            \STATE $\theta^i_{k+1} \leftarrow \theta^i_{k}$.
            \STATE Update $\phi^i_{k+1}$ and $\psi^i_{k+1}$ with value regression objective.
          \ENDFOR
\end{algorithmic}
\textbf{Output}: Optimal policy $\pi(\theta^i_K)$.
\end{algorithm}

Centralized training and decentralized execution (CTDE) is a popular framework to learn a single policy and avoid the joint action as inputs at the same time. The key technique is to learn a centralized critic that takes in the concatenation of each agent’s local observation, as shown in Algorithm \ref{MAP30-alg2}.
\begin{algorithm}[H]
\caption{MAP3O: Multi-Agent Penalized Proximal Policy Optimization} 
\label{MAP30-alg2}
\small
\textbf{Input}: initial policy $\pi(\theta_0)$, value function $V_R(\phi_0)$ and cost-value function $V_C(\psi_0)$.
\begin{algorithmic}[1] 
\FOR{$ k\mathrm{\  in\ } 0,1,2,...K$ }
            \STATE Empty data buffer $\mathcal{D}_k$.
            \WHILE{$\mathcal{D}_k$ is not full}
            \STATE Reset a new trajectory $\tau$.
            \STATE Initialize each actor RNN state $h^i_{0,\pi}$.
            \STATE Initialize each critic RNN states $h^i_{0,V_R}$ and $h^i_{0,V_C}$.
            \FOR{$ t\mathrm{\  in\ } 1,2,...T$ }
            \FOR{each agent $i$}
            \STATE $p^i_t, h^i_{t,\pi} = \pi(o^i_t,h^i_{t-1,\pi}; \theta_{t-1})$.
            \STATE Sample $a^i_t$ over $p^i_t$.
            \STATE $v^i_{t,R},h^i_{t,V_R} = V_R(\boldsymbol{o}_t,h^i_{t-1,V_R}; \phi_{t-1})$.
            \STATE $v^i_{t,C},h^i_{t,V_C} = V_C(\boldsymbol{o}_t,h^i_{t-1,V_C}; \psi_{t-1})$.
            \ENDFOR
            
            \STATE Execute $\boldsymbol{a}_t$ and record $r_t, c_t, \boldsymbol{o}_{t+1}$.
            \STATE $\tau += [\boldsymbol{o}_t,\boldsymbol{a}_t,\boldsymbol{h}_{t,\pi},\boldsymbol{v}_{t,R},\boldsymbol{h}_{t,V_R},\boldsymbol{v}_{t,C},\boldsymbol{h}_{t,V_C},r_t,c_t,\boldsymbol{o}_{t+1}]$.
            \ENDFOR
        	\STATE Compute every $\hat{R}_t = \sum^{T-t}_{k=0} \gamma^k r_{t+k}$ and $\hat{A}_R(\boldsymbol{o}_t,\boldsymbol{a}_t)$.
        	\STATE Compute every  $\hat{C}_t = \sum^{T-t}_{k=0} \gamma^k c_{t+k}$ and $\hat{A}_C(\boldsymbol{o}_t,\boldsymbol{a}_t)$.
        	\STATE Add data into $\mathcal{D}_k$.
        	\ENDWHILE
        	\FOR{$ n \mathrm{\  in\ } 0,1,2,...N$ }
        	\STATE Update the RNN hidden states in data buffer $\mathcal{D}_k$.
            \STATE Compute $\mathcal{L}^{\mathrm{CLIP}}_{R}(\theta_k)$ in Eq.~(\ref{L_R_CLIP}).
            \STATE Compute $\mathcal{L}^{\mathrm{CLIP}}_{C}(\theta_k)$ in Eq.~(\ref{L_C_CLIP}).
            \STATE $\theta_k \leftarrow \theta_k  - \eta \nabla  \mathcal{L}^{\mathrm{P3O}}(\theta_k)$ in Eq.~(\ref{cppo5}).
            \ENDFOR
            \STATE $\theta_{k+1} \leftarrow \theta_{k}$.
            \STATE Update $\phi_{k+1}$ and $\psi_{k+1}$ with value regression objective.
          \ENDFOR
\end{algorithmic}
\textbf{Output}: Optimal policy $\pi(\theta_K)$.
\end{algorithm}

\section{Implementation Details.}

\begin{table*}
\centering
\small
\caption{Algorithm-specific parameters.}
\setlength{\tabcolsep}{6mm}{
\begin{tabular}{lccccc}
\hline
Parameter & P3O & CPO & PPO-L & FOCOPS & PPO\\
\hline
Number of Hidden layers & 2 & 2 & 2 & 2 & 2 \\
Number of Hidden nodes & 255 & 255& 255& 255& 255\\
Activation Function  & $\tanh$ & $\tanh$ & $\tanh$ & $\tanh$ & $\tanh$ \\
Discount Factor $\gamma$ & 0.99 & 0.99 & 0.99 & 0.99& 0.99\\
GAE parameter $\lambda^{\mathrm{GAE}}$ & 0.97 & 0.97& 0.97& 0.97& 0.97\\
Actor Learning Rate $\eta_\pi$    & 3e-4  & N/A & 3e-4& 3e-4& 3e-4  \\
Critic Learning Rate $\eta_{V_R}$  & 1e-3  & 1e-3& 1e-3& 1e-3& 1e-3     \\
Critic Learning Rate $\eta_{V_C}$  & 1e-3  & 1e-3& 1e-3& 1e-3& N/A    \\
Penalty Factor $\kappa$   & 20  & N/A & N/A  & N/A & N/A      \\
Clip Ratio $\epsilon$ & 0.2  & N/A & 0.2  & N/A & 0.2 \\
Trust Region $[\delta^-,\delta^+]$ & [0,1e-2] & [0,1e-2]& [0,1e-2]& [0,1e-2]& [0,1e-2]   \\
Damping Coefficient & N/A & 0.1& N/A& N/A& N/A\\
Backtrack Learning Rate& N/A & 0.8& N/A& N/A& N/A\\
Backtrack Iterations& N/A & 10& N/A& N/A& N/A\\
Penalty Initial Value $\nu_0$ & N/A & N/A & 1& 1& N/A\\
Penalty Learning Rate $\eta_\nu$ & N/A & N/A & 0.05& 0.01& N/A\\
Temperature Factor $\lambda$ & N/A & N/A & N/A & 1.5 & N/A\\
\hline
\end{tabular}}
\label{tab:algo}
\end{table*}

\begin{table*}
\centering
\small
\caption{Experiment-specific parameters.}
\setlength{\tabcolsep}{6mm}{
\begin{tabular}{lccccc}
\hline
Parameter & Circle & Gather & Navigation &  Simple Spread & PointGoal\\
\hline
State Space $|S|$       & $\mathbb{R}^{113}$  & $\mathbb{R}^{86}$& $\mathbb{R}^{60}$& $\mathbb{R}^{54}$  &$\mathbb{R}^{60}$    \\
Action Space $|A|$    & $\mathbb{R}^8$     & $\mathbb{R}^2$  & $\mathbb{R}^2$  & 5  & $\mathbb{R}^2$  \\
Cost Limit $d$ & 50 & 0.5 & 25,20 & 0.5  & 25\\
Rollout Length $T$ & 1e3 & 1e2 & 1e3 & 25 & 1e3 \\
Buffer Size $|\mathcal{D}|$ & 3e4 & 3e3 & 3e4 & 1e3 & 3e4  \\
Total Interactions & 1e7 & 1e7& 1e7 & 5e6 & 1e7\\
\hline
\end{tabular}}
\label{tab:exp}
\end{table*}

\subsection{Environments}
\paragraph{Circle}
This environment is inspired by \citeauthor{achiam2017constrained}~\shortcite{achiam2017constrained}. Reward is maximized by moving along a circle of radius $d$:
\begin{equation*}
    R = \frac{v^\mathrm{T}[-y,x]}{1 + \big| \sqrt{x^2+y^2} - d\big|},
\end{equation*}
but the safety region $x_{\mathrm{lim}}$ is smaller than the radius $d$:
\begin{equation*}
    C = \mathbf{1}[x > x_{\mathrm{lim}}].
\end{equation*}

In our setting, $d=10, x_{\mathrm{lim}}=3$ and the constraint threshold is 50 by default, which indicates the agent is only allowed to move in a narrow space for safety requirements.

\paragraph{Gather} 
This environment is inspired by \citeauthor{achiam2017constrained}~\shortcite{achiam2017constrained}. The agent receives a reward of +1 for collecting an apple (green ball) and a reward of -1 for collecting a bomb (red ball). It also receives a cost of +1 when encountering the bomb. 8 apples and bombs spawn respectively on the map at the start of each episode.

In our settings, we set the constraint threshold to be 0.5. Notably, the punishment in reward function, to some extent, keeps the agent from collecting bombs even trained by the unconstrained PPO. But it still has many safety violations and fails to satisfy the constraint threshold.

\paragraph{Navigation}
This environment is inspired by \citeauthor{ray2019benchmarking}~\shortcite{ray2019benchmarking}. Reward is maximized by getting close to the destination:
\begin{equation*}
    R = \mathrm{Dist}(target,s_{t-1}) - \mathrm{Dist}(target,s_{t}),
\end{equation*}
but it yields a cost of +1 when the agent hits the hazard or the pillar. The two different types of cost functions are returned separately and have different thresholds. In out setting, $d_1 = 25$ for the hazard constraint and $d_2 = 20$ for the pillar constraint.  

\paragraph{Simple Spread} 
This environment is inspired by \citeauthor{lowe2017multi}~\shortcite{lowe2017multi}. In our experiment, 3 agents and 3 endpoints spawn respectively on the 2D arena at the start of each episode.
The agents are collaborative and rewarded for moving towards corresponding endpoints:
\begin{equation*}
    R = -\sum_{i=1}^{3} \mathrm{Dist}(endpoint_i,s_{it}),
\end{equation*}
but it yields a cost of +1 if any pair of agents collide. We set $d = 0.5$.

\subsection{Agents}
For single-constraint scenarios, Point agent is a 2D mass point($A \subseteq  \mathbb{R}^2$) and Ant agent is an quadruped robot($A \subseteq  \mathbb{R}^8$). For the multi-constraint scenario which is modified from OpenAI SafetyGym~\cite{ray2019benchmarking}, $S \subseteq \mathbb{R}^{28 + 16\cdot m}$  where $m$ is the number of pseudo-radars(one for each type of obstacles and we set two different types of obstacles in the Navigation task) and $A \subseteq \mathbb{R}^2$ for a mass point or a wheeled car.
For the multi-agent scenario which is inspired by OpenAI Multi-Agent Particle Environment~\cite{lowe2017multi}, $O \subseteq \mathbb{R}^{18}$ each (the state is partially observable) and $|A|$ is 5 for any particle moving up/down/left/right and remaining still.

\subsection{Parameters}
We summarize the algorithm-specific parameters and experiment-specific parameters in Table \ref{tab:algo} and Table \ref{tab:exp} , respectively.

\clearpage

\end{document}